\newcommand{\Loss}{\mathop{{\rm{Loss}}}\nolimits}
\newcommand{\Kcal}{{\cal K}}
\newcommand{\Fcal}{{\cal F}}
\newcommand{\RR}{{\mathrm{RR}}}
\newcommand{\tr}{\mathop{{\rm{tr}}}\nolimits}
\newcommand{\R}{\mathbb R}
\newcommand{\N}{{\mathbb N}}
\newcommand{\calC}{{\cal C}}
\newcommand{\calN}{{\cal N}}
\newcommand{\cov}{\mathop{{\mathrm{cov}}}\nolimits}
\newtheorem{proposition}{Proposition}
\newtheorem{theorem}{Theorem}
\newtheorem{lemma}{Lemma}
\newtheorem{corollary}{Corollary}
\newtheorem{protocol}{Protocol}{\bfseries}{}
\theoremstyle{remark} 
\newtheorem{remark}{Remark}
\newcounter{append} 
\newcommand{\appendixsection}[1]{\refstepcounter{append}
{\section*{Appendix \Alph{append}. #1}}} 
\author{{\bf Fedor Zhdanov} and {\bf Yuri Kalnishkan} \\
Computer Learning Research Centre and \\
Department of   Computer Science, \\
  Royal Holloway, University of London, \\
Egham,  Surrey, TW20 0EX, United Kingdom \\
{\tt \{fedor,yura\}@cs.rhul.ac.uk}}
\title{An Identity for Kernel Ridge Regression\footnote{An earlier
    version of this paper appeared in Proceedings of ALT 2010,
    LNCS~6331, Springer, 2010. This paper also reproduces some results
    from technical report \cite{abs-0910-4683}.}}
\begin{document}

\maketitle

\begin{abstract}
This paper derives an identity connecting the square loss of ridge
regression in on-line mode with the loss of the retrospectively best
regressor. Some corollaries about the properties of the cumulative
loss of on-line ridge regression are also obtained.
\end{abstract}

\section{Introduction}

Ridge regression is a powerful technique of machine learning. It was
introduced in \cite{regression_hoerl}; the kernel version of it is
derived in \cite{vovk_ridge}.

Ridge regression can be used as a batch or on-line algorithm. This
paper proves an identity connecting the square losses of ridge
regression used on the same data in batch and on-line fashions. The
identity and the approach to the proof are not entirely new. The
identity implicitly appears in \cite{regression_azoury_warmuth} for
the linear case (it can be obtained by summing (4.21) from
\cite{regression_azoury_warmuth} in an exact rather than estimated
form). One of the proof methods based essentially on Bayesian
estimation features in \cite{regression_kakade2005}, which focuses on
probabilistic statements and stops one step short of formulating the
identity. In this paper we put it all together, explicitly formulate
the identity in terms of ridge regression, and give two proofs for the
kernel case.  The first proof is by calculating the likelihood in a
Gaussian processes model by different methods. It is remarkable that a
probabilistic proof yields a result that holds in the worst case along
any sequence of signals and outcomes with no probabilistic
assumptions. The other proof is based on the analysis of a
Bayesian-type algorithm for prediction with expert advice; it is
reproduced from unpublished technical report \cite{abs-0910-4683}.

We use the identity to derive several inequalities providing upper
bounds for the cumulative loss of ridge regression applied in the
on-line fashion. Corollaries \ref{corollary_kernel} and
\ref{corollary_linear} deal with the `clipped' ridge regression. The
later reproduces Theorem~4.6 from \cite{regression_azoury_warmuth}
(this result is often confused with Theorem~4 in \cite{vovk_cols},
which, in fact, provides a similar bound for an essentially different
algorithm). Corollary~\ref{cor:asympotic} shows that for continuous
kernels on compact domains the loss of (unclipped) on-line ridge
regression is asymptotically close to the loss of the retrospectively
best regressor.  This result cannot be generalised to non-compact
domains.

In the literature there is a range of specially designed
regression-type algorithms with better worst-case bounds or bounds
covering wider cases. Aggregating algorithm regression (also known as
Vovk-Azoury-Warmuth predictor) is described in \cite{vovk_cols},
\cite{regression_azoury_warmuth}, and Section~11.8 of
\cite{cbl_book_prediction}. Theorem~1 in \cite{vovk_cols} provides an
upper bound for aggregating algorithm regression; the bound is better
than the bound given by Corollary~\ref{corollary_linear} for clipped
ridge regression. The bound from \cite{vovk_cols} has also been shown
to be optimal in a strong sense. The exact relation between the
performance of ridge regression and the performance of aggregating
algorithm regression is not known. Theorem~3 in \cite{vovk_cols}
describes a case where aggregating algorithm regression performs
better, but in the case of unbounded signals. An important class of
regression-type algorithms achieving different bounds is based on the
gradient descent idea; see \cite{calif_wcql}, \cite{KivinenW97}, and
Section~11 in \cite{cbl_book_prediction}. In \cite{herbster_tracking}
and \cite{BusuttilK07} regression-type algorithms dealing with
changing dependencies are constructed. In \cite{chernov_discounted}
regression is considered within the framework of discounted loss,
which decays with time.

The paper is organised as follows. Section~\ref{sect_intro} introduces
kernels and kernel ridge regression in batch and on-line settings. We
use an explicit formula to introduce ridge regression;
Appendix~\ref{appendix_optimality} contains a proof that this formula
specifies the function with a certain optimality
property. Section~\ref{sect_identity} contains the statement of the
identity and Subsection~\ref{subsect_a0} shows that the
identity remains true (in a way) for the case of zero ridge.

Section~\ref{sect_corollaries} discusses corollaries of
the identity.
Section~\ref{sect_proof} contains the proof based on a probabilistic
interpretation of ridge regression in the context of Gaussian
fields. Section~\ref{sect_AA} contains an alternative proof based on
prediction with expert advice. The proof has been reproduced from
\cite{abs-0910-4683}. 

Appendixes~\ref{appendix_optimality}--\ref{appendix_cb_bound} to the
paper contain proofs of some known results; they have been included
for completeness and to clarify the intuition behind other proofs in
the paper.

\section{Kernel Ridge Regression in On-line and Batch Settings}
\label{sect_intro}

\subsection{Kernels}

A {\em kernel} on a domain $X$, which is an arbitrary set with no
structure assumed, is a symmetric positive-semidefinite function of
two arguments, i.e., $\Kcal: X\times X\to\R$ such that
\begin{enumerate}
\item for all $x_1,x_2\in X$ we have $\Kcal(x_1,x_2)=\Kcal(x_2,x_1)$
  and
\item for any positive integer $T$, any $x_1,x_2,\ldots,x_T\in X$ and
  any real numbers  $\alpha_1,\alpha_2,\ldots,\alpha_T\in\R$ we have
  $\sum_{i,j=1}^T\Kcal(x_i,x_j)\alpha_i\alpha_j\ge 0$.
\end{enumerate}
An equivalent definition can be given as follows. A function $\Kcal:
X\times X\to\R$ is a kernel if there is a Hilbert space $\Fcal$ of
functions on $X$ such that
\begin{enumerate}
\item for every $x\in X$ the function $\Kcal(x,\cdot)$, i.e., $\Kcal$
  considered as a function of the second argument with the first
  argument fixed, belongs to $\Fcal$ and
\item for every $x\in X$ and every $f\in\Fcal$ the value of $f$ at $x$
  equals the scalar product of $f$ by $\Kcal(x,\cdot)$, i.e.,
  $f(x)=\langle f,\Kcal(x,\cdot)\rangle_\Fcal$; this property is often
  called the {\em reproducing property}.
\end{enumerate}
The second definition is sometimes said to specify a {\em reproducing
  kernel}. The space $\Fcal$ is called the {\em reproducing kernel
  Hilbert space (RKHS)} for the kernel $\Kcal$ (it can be shown that
the RKHS for a kernel $\Kcal$ is unique). The equivalence of the two
definitions is proven in \cite{kern_aron43}.

\subsection{Regression in Batch and On-line Settings}
\label{subsect_RR}

Suppose that we are given a sample of pairs
$$S=((x_1,y_1),(x_2,y_2),\ldots,(x_T,y_T))\enspace,$$ where all
$x_t\in X$ are called {\em signals} and $y_t\in\R$ are called {\em
  outcomes} (or {\em labels}) for the corresponding signals. Every pair
$(x_t,y_t)$ is called {\em a (labelled) example}.

The task of regression is to fit a function (usually from a particular
class) to the data. The method of {\em kernel ridge regression} with a
kernel $\Kcal$ and a real regularisation parameter ({\em
  ridge}) $a>0$  suggests the function $f_\RR(x)={Y'(K+aI)^{-1}k(x)}$, where
$Y=(y_1,y_2,\ldots,y_T)'$ is the column vector\footnote{Throughout
  this paper $M'$ denotes the transpose of a matrix $M$.} of outcomes,
\begin{equation*}
K=
\begin{pmatrix}
\Kcal(x_1,x_1) & \Kcal(x_1,x_2) & \ldots & \Kcal(x_1,x_T)\\
\Kcal(x_2,x_1) & \Kcal(x_2,x_2) & \ldots & \Kcal(x_2,x_T)\\
\vdots         & \vdots        & \ddots & \vdots \\
\Kcal(x_T,x_1) & \Kcal(x_T,x_2) & \ldots & \Kcal(x_T,x_T)
\end{pmatrix}
\end{equation*}
is the {\em kernel matrix} and
\begin{equation*}
k(x)=
\begin{pmatrix}
\Kcal(x_1,x)\\
\Kcal(x_2,x)\\
\vdots      \\
\Kcal(x_T,x)
\end{pmatrix}\enspace.
\end{equation*}
Note that the matrix $K$ is positive-semidefinite by the definition of
a kernel, therefore the matrix $K+aI$ is positive-definite and thus
non-singular.

If the sample $S$ is empty, i.e., $T=0$ or no examples have been
given to us yet, we assume that $f_\RR(x)=0$ for all $x$.

It is easy to see that $f_\RR(x)$ is a linear combination of functions
$\Kcal(x_t,x)$ (note that $x$ does not appear outside of $k(x)$ in the
ridge regression formula) and therefore it belongs to the RKHS $\Fcal$
specified by the kernel $\Kcal$. It can be shown that on $f_\RR$ the
minimum of the expression $\sum_{t=1}^T(f(x_t)-y_t)^2+a\|f\|^2_\Fcal$
(where $\|\cdot\|_\Fcal$ is the norm in $\Fcal$) over all $f$ from the
RKHS $\Fcal$ is achieved. For completeness, we include a proof in
Appendix~\ref{appendix_optimality}.

Suppose now that the sample is given to us example by example. For
each example we are shown the signal and then asked to produce a
prediction for the outcome. One can say that the learner operates
according to the following protocol:

\begin{protocol}
\label{prot_online}
{\tt
\begin{tabbing}
  \quad\= for \=\quad\kill\\
  for $t=1,2,\ldots$\\
  \> read signal $x_t$\\
  \> output prediction $\gamma_t$\\
  \> read true outcome $y_t$\\
  endfor
\end{tabbing}
}
\end{protocol}

This learning scenario is called {\em on-line} or {\em
  sequential}. The scenario when the whole sample is given to us at
once as before is called {\em batch} to distinguish it from on-line.

One can apply ridge regression in the on-line scenario in the
following natural way. On step $t$ we form the sample $S_{t-1}$ from the
$t-1$ known examples $(x_1,y_1),(x_2,y_2),\ldots,(x_{t-1},y_{t-1})$
and output the prediction suggested by the ridge regression function
for this sample.

For the on-line scenario on step $t$ we will use the same notation as
in the batch mode but with the index $t-1$ denoting the
time\footnote{The conference version of this paper used $t$ rather
  than $t-1$. This paper uses $t-1$ because it coincides with the size
  and for compatibility with earlier papers.}. Thus $K_{t-1}$ is the
kernel matrix on step $t$ (its size is $(t-1)\times (t-1)$), $Y_{t-1}$
is the vector of outcomes $y_1,y_2,\ldots,y_{t-1}$, and
$k_{t-1}(x_t)=(\Kcal(x_1,x_t),\Kcal(x_2,x_t),\ldots,\Kcal(x_{t-1},x_t))'$
is $k(x_t)$ for step $t$. We will be referring to the prediction
output by on-line ridge regression on step $t$ as $\gamma_t^\RR$.

\section{The Identity}
\label{sect_identity}

\begin{theorem}
\label{th_main}
Take a kernel $\Kcal$ on a domain $X$ and a parameter $a>0$. Let
$\Fcal$ be the RKHS for the kernel $\Kcal$. For a sample
$(x_1,y_1),(x_2,y_2),\ldots,(x_T,y_T)$ let
$\gamma_1^\RR,\gamma_2^\RR,\ldots,\gamma_T^\RR$ be the predictions output by ridge
regression with the kernel $\Kcal$ and the parameter $a$ in the
on-line mode. Then
\begin{equation*}
\sum_{t=1}^T\frac{(\gamma^\RR_t-y_t)^2}{1+d_t/a}=\min_{f\in\Fcal}\left(
\sum_{t=1}^T(f(x_t)-y_t)^2+a\|f\|^2_\Fcal\right)
=aY_{T}'(K_{T}+aI)^{-1}Y_{T}\enspace,
\end{equation*}
where
$d_t=\Kcal(x_{t},x_t)-k'_{t-1}(x_t)(K_{t-1}+aI)^{-1}k_{t-1}(x_t)\ge 0$
and all other notation is as above.
\end{theorem}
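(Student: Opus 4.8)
The plan is to establish the two equalities separately, working from right to left since the rightmost expression $aY_T'(K_T+aI)^{-1}Y_T$ is the most tractable. I would begin with the middle equality. By the optimality property stated above (and proved in Appendix~\ref{appendix_optimality}), the minimum over $\Fcal$ is attained at $f_\RR$, whose coefficient vector is $c=(K_T+aI)^{-1}Y_T$ in the expansion $f_\RR=\sum_{t=1}^T c_t\Kcal(x_t,\cdot)$. First I would evaluate the two terms of the objective at this minimiser: the fitted values form the vector $K_T(K_T+aI)^{-1}Y_T$, so the residual vector is $a(K_T+aI)^{-1}Y_T$ and the sum of squared residuals is $a^2Y_T'(K_T+aI)^{-2}Y_T$; the reproducing property gives $\|f_\RR\|_\Fcal^2=c'K_Tc=Y_T'(K_T+aI)^{-1}K_T(K_T+aI)^{-1}Y_T$. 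Adding $a$ times the second to the first and using the collapse $a^2(K_T+aI)^{-2}+aK_T(K_T+aI)^{-2}=a(K_T+aI)^{-1}$ reduces everything to $aY_T'(K_T+aI)^{-1}Y_T$, which is the middle equality.

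For the first equality I would argue by induction on $T$, showing that the quantity $L_t:=aY_t'(K_t+aI)^{-1}Y_t$ (with $L_0:=0$) grows by exactly one summand at each step, i.e.
\begin{equation*}
L_T-L_{T-1}=\frac{(\gamma^\RR_T-y_T)^2}{1+d_T/a}\enspace.
\end{equation*}
To see this I would partition $K_T+aI$ into the leading block $K_{T-1}+aI$, the off-diagonal vector $k_{T-1}(x_T)$, and the corner $\Kcal(x_T,x_T)+a$, and apply the block-inverse (Schur-complement) formula. The Schur complement of the leading block is precisely $s:=\Kcal(x_T,x_T)+a-k'_{T-1}(x_T)(K_{T-1}+aI)^{-1}k_{T-1}(x_T)=a+d_T$. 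Expanding $Y_T'(K_T+aI)^{-1}Y_T$ with $Y_T=(Y_{T-1}',y_T)'$, the leading-block contribution reproduces $Y_{T-1}'(K_{T-1}+aI)^{-1}Y_{T-1}$ and the remaining three terms combine into the single square $s^{-1}\bigl(k'_{T-1}(x_T)(K_{T-1}+aI)^{-1}Y_{T-1}-y_T\bigr)^2$.

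The decisive observation is that $k'_{T-1}(x_T)(K_{T-1}+aI)^{-1}Y_{T-1}$ is exactly the on-line prediction $\gamma^\RR_T$ that ridge regression outputs on step $T$ from the first $T-1$ examples, so the bracket equals $\gamma^\RR_T-y_T$; multiplying the increment by $a$ and using $a/s=1/(1+d_T/a)$ yields the displayed recursion, and telescoping from $L_0=0$ gives the first equality. I expect the block-inverse bookkeeping to be the main obstacle: one must track the cross terms (which supply the factor turning two off-diagonal products into a single squared residual) and correctly identify both $s=a+d_T$ and the prediction $\gamma^\RR_T$ inside the algebra. Finally, to justify $d_t\ge0$ I would note that $a+d_t$ is the Schur complement of $K_{t-1}+aI$ in the positive-definite matrix $K_t+aI$, hence positive; the sharper bound $d_t\ge0$ follows because $k_{t-1}(x_t)$ lies in the range of the positive-semidefinite matrix $K_{t-1}$ where $(K_{t-1}+aI)^{-1}\preceq K_{t-1}^{+}$, so $k'_{t-1}(x_t)(K_{t-1}+aI)^{-1}k_{t-1}(x_t)\le k'_{t-1}(x_t)K_{t-1}^{+}k_{t-1}(x_t)\le\Kcal(x_t,x_t)$, the last inequality being the generalised Schur-complement condition for the positive-semidefinite kernel matrix $K_t$.
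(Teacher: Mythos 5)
Your proposal is correct, and it takes a genuinely different route from both proofs in the paper. The paper's first proof computes the joint density of $(y_{x_1},\ldots,y_{x_T})$ in a Gaussian-field model in three ways (chain rule, marginalisation, direct calculation) and equates logarithms; this requires the detour through Remark~\ref{rem_diff} to make all signals distinct and a continuity-in-$\Kcal$ argument to handle singular $K_T$ in the marginalisation step. The paper's second proof goes through the Bayesian merging algorithm for prediction with expert advice in the linear case and then kernelises. You instead give a purely deterministic linear-algebra argument: the middle equality by substituting the minimiser $c=(K_T+aI)^{-1}Y_T$ and collapsing $a^2(K_T+aI)^{-2}+aK_T(K_T+aI)^{-2}=a(K_T+aI)^{-1}$ (the paper does essentially this computation, but only in the linear case inside the proof of Theorem~\ref{th_linear}), and the left equality by a Schur-complement telescoping of the quadratic form $L_t=aY_t'(K_t+aI)^{-1}Y_t$, where the increment $L_T-L_{T-1}=s^{-1}a(\gamma_T^\RR-y_T)^2$ with $s=a+d_T$ falls out of the block-inverse formula and the identification $k_{T-1}'(x_T)(K_{T-1}+aI)^{-1}Y_{T-1}=\gamma_T^\RR$. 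This is the quadratic-form analogue of the paper's determinant recursion in Lemma~\ref{lem_proddt}. What your approach buys: it needs no probabilistic model, no distinctness of the $x_t$, and no special handling of singular $K_T$, since $K_{t-1}+aI$ is always positive-definite; it also dispenses with the kernelisation step. What it loses is the interpretive content the paper is after --- the reading of $\gamma_t^\RR$ and $a+d_t$ as conditional mean and variance, which motivates the identity and is reused in Lemma~\ref{lemma_dt_to_0} and the corollaries. Your algebraic justification of $0\le d_t$ via the generalised Schur-complement condition and $(K_{t-1}+aI)^{-1}\preceq K_{t-1}^{+}$ on the range of $K_{t-1}$ is sound and replaces the paper's variance argument.
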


The left-hand side term in this equality is {\em close} to the
cumulative square loss of ridge regression in the on-line mode. The
difference is in the denominators ${1+d_t/a}$. The values $d_t$ have
the meaning of variances of ridge regression predictions according to
the probabilistic view discussed below. Lemma~\ref{lemma_dt_to_0}
shows that $d_t\to 0$ as $t\to\infty$ for continuous kernels on
compact domains. The terms of the identity thus become close to the
cumulative square loss asymptotically; this intuition is formalised by
Corollary~\ref{corollary_kernel}.

Note that the minimum in the middle term is attained on $f$ specified
by batch ridge regression knowing the whole sample. It is thus {\em
  nearly} the square loss of the {\em retrospectively} best fit
$f\in\Fcal$.

The right-hand side term is a simple closed-form expression.

\subsection{The Case of Zero Ridge}
\label{subsect_a0}

In this subsection we show that the identity essentially remains true
for $a=0$.

Let the parameter $a$ in the identity approach 0. One may think that
the third term of the identity should tend to zero. On the other hand,
the value of the middle term of the identity for $a=0$ depends on $Y_T$;
the values of $y_t$ can be chosen (at least in some cases) so that
there is no exact fit in the RKHS (i.e., no $f\in\Fcal$ such that
$f(x_t)=y_t$, $t=1,2,\ldots,T$) and the middle term is not equal to
0. This section resolves the apparent contradiction.

As a matter of fact, the limit of the identity as $a\to 0$ does not
have to be 0. The situation when there is no exact fit in the RKHS is
only possible when the matrix $K_T$ is singular, and in this situation
the right-hand side does not always tend to 0.

The expression on the left-hand side of the identity is formally
undefined for $a=0$. The expression on the right-hand side is
undefined when $a=0$ and $K_T$ is singular.  The expression in the
centre, by contrast, always makes sense. The following theorem
clarifies the situation.

\begin{corollary}
Under the conditions of Theorem~\ref{th_main}, as $a\to 0$, the
terms of the identity 
\begin{equation*}
\sum_{t=1}^T\frac{(\gamma^\RR_t-y_t)^2}{1+d_t/a}=\min_{f\in\Fcal}\left(
\sum_{t=1}^T(f(x_t)-y_t)^2+a\|f\|^2_\Fcal\right)
=aY_{T}'(K_{T}+aI)^{-1}Y_{T}\enspace,
\end{equation*}
tend to the squared norm of the projection of
the vector $Y_T$ to the null space of the matrix $K_{T}$. This coincides
with the value of the middle term of the identity for $a=0$.
\end{corollary}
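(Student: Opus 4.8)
The plan is to lean on Theorem~\ref{th_main}: for every $a>0$ the three terms are already equal, so they certainly share a common limit as $a\to 0$. It therefore suffices to compute the limit of whichever term is easiest — I would use the closed-form right-hand side $aY_T'(K_T+aI)^{-1}Y_T$ — and then, separately, to evaluate the middle term at $a=0$ directly and check that the two numbers coincide.

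For the limit I would diagonalise. Since $K_T$ is symmetric positive-semidefinite, write $K_T=U\Lambda U'$ with $U$ orthogonal and $\Lambda=\mathrm{diag}(\lambda_1,\ldots,\lambda_T)$, all $\lambda_i\ge 0$. Putting $z=U'Y_T$ turns the right-hand side into $\sum_{i=1}^T \frac{a z_i^2}{\lambda_i+a}$. As $a\to 0$ every summand with $\lambda_i>0$ tends to $0$, while every summand with $\lambda_i=0$ equals $z_i^2$ for all $a$; hence the limit is $\sum_{i:\lambda_i=0} z_i^2$, which is exactly the squared norm of the orthogonal projection of $Y_T$ onto the eigenspace of $K_T$ for eigenvalue $0$, i.e.\ onto the null space of $K_T$.

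For the middle term at $a=0$ the quantity is $\min_{f\in\Fcal}\sum_{t=1}^T(f(x_t)-y_t)^2$. The key observation is that the set of attainable prediction vectors $(f(x_1),\ldots,f(x_T))'$, as $f$ ranges over $\Fcal$, is precisely the column space of $K_T$. By the reproducing property $f(x_t)=\langle f,\Kcal(x_t,\cdot)\rangle_\Fcal$; decomposing any $f$ into its component in $\mathspan\{\Kcal(x_j,\cdot):j=1,\ldots,T\}$ and an orthogonal remainder shows the values $f(x_t)$ depend only on the former, and for $f=\sum_j c_j\Kcal(x_j,\cdot)$ one gets $(f(x_1),\ldots,f(x_T))'=K_T c$. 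As $c$ ranges over $\R^T$ this sweeps out the range of $K_T$. Because $K_T$ is symmetric, its range is the orthogonal complement of its null space, so the minimal squared distance from $Y_T$ to this subspace equals the squared norm of the projection of $Y_T$ onto the null space — matching the limit computed above, and so also matching the limit of the (equal) left-hand and right-hand terms.

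The main obstacle is this last step: justifying that the attainable vectors are exactly $\mathrm{range}(K_T)$ and invoking the orthogonal-complement relation $\mathrm{range}(K_T)^{\perp}=\ker K_T$ for the symmetric matrix $K_T$. Once that identification is secured, the spectral computation of the limit and the verification that the two values agree are routine.
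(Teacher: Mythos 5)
Your proof is correct, but it reaches the limit by a genuinely different route from the paper's. The paper works entirely with the variational (middle) term: writing $m_a=\inf_{f\in\Fcal}\left(\sum_{t=1}^T(f(x_t)-y_t)^2+a\|f\|^2_\Fcal\right)$, it sandwiches $m_0\le m_a\le m_0+a\|f_0\|^2_\Fcal$ (where $f_0$ attains $m_0$) to get continuity of $m_a$ at $a=0$, and then identifies $m_0$ with the squared norm of the projection of $Y_T$ onto the null space of $K_T$ --- this last step being essentially identical to yours, via the representer theorem and the observation that the attainable value vectors are exactly $K_TC$. You instead diagonalise $K_T$ and compute the limit of the closed-form right-hand side $aY_T'(K_T+aI)^{-1}Y_T=\sum_i a z_i^2/(\lambda_i+a)$ directly, then check separately that this agrees with the middle term evaluated at $a=0$. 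Both arguments are complete: yours turns the limit into an explicit eigenvalue computation and shows concretely that each direction with $\lambda_i>0$ contributes only $O(a)$, while the paper's sandwich argument avoids the spectral decomposition and yields the quantitative rate $0\le m_a-m_0\le a\|f_0\|^2_\Fcal$. Your key identification --- that the attainable vectors $(f(x_1),\ldots,f(x_T))'$ sweep out the range of $K_T$, whose orthogonal complement for a symmetric matrix is its null space --- is fully justified by the representer theorem (Proposition~\ref{prop_representer}) and is the same observation the paper relies on for the $a=0$ evaluation.
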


The {\em null space} (also called the {\em kernel}) of a matrix $S$ is
the space of vectors $x$ such that $Sx=0$. It is easy to see that the
dimension of the null space and the rank of $S$ (equal to the
dimension of the span of the columns of $S$) sum up to the number of
columns of $S$. If, moreover, $S$ is square and symmetric, the null
space of $S$ is the orthogonal complement of the span of the columns
of $S$.

\begin{proof}
For every $a\ge 0$ let $m_a=\inf_{f\in\Fcal}
\left(\sum_{t=1}^T(f(x_t)-y_t)^2+a\|f\|^2_\Fcal\right)$. Proposition~\ref{prop_KRR}
implies that if $a>0$ then the infimum is achieved on the ridge
regression function with the parameter $a$. Throughout this proof we
will denote this function by $f_a$.

Let us calculate the value of $m_0=\inf_{f\in\Fcal}
\sum_{t=1}^T(f(x_t)-y_t)^2$. It follows from the representer theorem
(see Proposition~\ref{prop_representer}) that it is sufficient to
consider the functions $f$ of the form
$f(\cdot)=\sum_{i=1}^Tc_i\Kcal(x_i,\cdot)$.

For $f(\cdot)=\sum_{t=1}^Tc_i\Kcal(x_t,\cdot)$ the sum
$\sum_{t=1}^T(f(x_t)-y_t)^2$ equals the squared norm
$\|K_{T}C-Y_T\|^2$, where $C=(c_1,c_2,\ldots,c_T)'$ is the vector of
coefficients of the linear combination. If $C_0$ minimises this
expression, then $K_{T}C_0$ is the projection of $Y_T$ to the linear
span of the columns of $K_{T}$. The vector $Y_T-K_{T}C_0$ is then
the projection of $Y_T$ to the orthogonal complement of the span and the
orthogonal complement is the null space of $K_{T}$.

Let us show that $m_a$ is continuous at $a=0$. Fix some $f_0$ such
that the infimum $m_0$ is achieved on $f_0$ (if $K_{T}$ is singular
there can be more than one such function). Substituting $f_0$ into the
formula for $m_a$ yields $m_a\le m_0+a\|f_0\|^2_\Fcal= m_0+o(1)$ as
$a\to 0$. Substituting $f_a$ into the definition of $m_0$ yields
$m_0\le m_a$. We thus get that $m_a\to m_0$ as $a\to 0$.
\end{proof}

\section{Corollaries}
\label{sect_corollaries}

In this section we use the identity to obtain some properties of
cumulative losses of on-line algorithms.

\subsection{A Multiplicative Bound}

It is easy to obtain a basic multiplicative bound on the loss of
on-line ridge regression. The matrix $(K_{t-1}+aI)^{-1}$ is
positive-definite as the inverse of a positive-definite, therefore
$k'_{t-1}(x_t)(K_{t-1}+aI)^{-1}k_{t-1}(x_t)\ge 0$ and
$d_t\le\Kcal(x_t,x_t)$. Assuming that there is $c_\Fcal>0$ such that
$\Kcal(x,x)\le c_\Fcal^2$ on $X$ (i.e., the evaluation functional on
$\Fcal$ is uniformly bounded by $c_\Fcal$), we get
\begin{multline}
\label{bound_mult}
\sum_{t=1}^T(\gamma^\RR_t-y_t)^2\le \left(1+\frac{c_\Fcal^2}{a}\right)
\min_{f\in\Fcal}\left(\sum_{t=1}^T(f(x_t)-y_t)^2+a\|f\|^2_\Fcal\right)=\\
a\left(1+\frac{c_\Fcal^2}{a}\right)Y'_{T}(K_{T}+aI)^{-1}Y_{T}\enspace.
\end{multline}

\subsection{Additive Bounds for Clipped Regression}

Some less trivial bounds can be obtained under the following
assumption.  Suppose that we know in advance that outcomes $y$ come
from an interval $[-Y,Y]$, and $Y$ is known to us. It does not make
sense then to make predictions outside of the interval. One may
consider {\em clipped ridge regression}, which operates as
follows. For a given signal the ridge regression prediction
$\gamma^\RR$ is calculated; if it falls inside the interval, it is
kept; if it is outside of the interval, it is replaced by the closest
point from the interval. Denote the prediction of clipped ridge
regression by $\gamma^{\RR,Y}$. If $y\in [-Y,Y]$ indeed holds, then
$(\gamma^{\RR,Y}-y)^2\le(\gamma^{\RR}-y)^2$ and
$(\gamma^{\RR,Y}-y)^2\le 4Y^2$.

\begin{corollary}
\label{corollary_kernel}
Take a kernel $\Kcal$ on a domain $X$ and a parameter $a>0$. Let
$\Fcal$ be the RKHS for the kernel $\Kcal$. For a sample
$(x_1,y_1),(x_2,y_2),\ldots,(x_T,y_T)$ such that $y_t\in [-Y,Y]$ for
all $t=1,2,\ldots,T$, let
$\gamma_1^{\RR,Y},\gamma_2^{\RR,Y},\ldots,\gamma_T^{\RR,Y}$ be the
predictions output by clipped ridge regression with the kernel
$\Kcal$ and the parameter $a$ in the on-line mode. Then
\begin{multline*}
\sum_{t=1}^T(\gamma^{\RR,Y}_t-y_t)^2\le  \\ \min_{f\in\Fcal}\left(
\sum_{t=1}^T(f(x_t)-y_t)^2+a\|f\|^2_\Fcal\right)+4Y^2\ln\det\left(I+\frac
1a K_{T}\right)\enspace,
\end{multline*}
where $K_{T}$ is as above.
\end{corollary}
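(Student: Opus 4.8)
The plan is to split each clipped loss term so that one piece matches the left-hand side of the identity in Theorem~\ref{th_main} and the other piece accumulates into the logarithmic determinant. I would start from the algebraic decomposition
\begin{equation*}
(\gamma^{\RR,Y}_t-y_t)^2 = \frac{(\gamma^{\RR,Y}_t-y_t)^2}{1+d_t/a} + (\gamma^{\RR,Y}_t-y_t)^2\,\frac{d_t/a}{1+d_t/a}\enspace,
\end{equation*}
sum it over $t=1,\ldots,T$, and handle the two resulting sums separately.

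For the first sum I would use the clipping inequality $(\gamma^{\RR,Y}_t-y_t)^2\le(\gamma^{\RR}_t-y_t)^2$ to pass from the clipped to the unclipped predictions term by term (the denominators $1+d_t/a$ are positive, so the inequality is preserved), and then invoke Theorem~\ref{th_main} to recognise $\sum_{t=1}^T (\gamma^{\RR}_t-y_t)^2/(1+d_t/a)$ as exactly $\min_{f\in\Fcal}\bigl(\sum_{t=1}^T(f(x_t)-y_t)^2+a\|f\|^2_\Fcal\bigr)$. For the second sum I would apply the other clipping bound $(\gamma^{\RR,Y}_t-y_t)^2\le 4Y^2$ to factor out $4Y^2$, leaving $4Y^2\sum_{t=1}^T \frac{d_t/a}{1+d_t/a}$, and then bound each summand by the elementary inequality $\frac{u}{1+u}\le\ln(1+u)$ for $u\ge 0$ (with $u=d_t/a\ge 0$, which holds since $d_t\ge 0$), obtaining $4Y^2\sum_{t=1}^T\ln(1+d_t/a)$.

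The substantive step is then to identify $\sum_{t=1}^T\ln(1+d_t/a)$ with $\ln\det(I+\tfrac1a K_T)$. I would establish the telescoping determinant identity by writing $K_t+aI$ in block form with $K_{t-1}+aI$ in the top-left corner, the vector $k_{t-1}(x_t)$ in the border, and $\Kcal(x_t,x_t)+a$ in the corner, and applying the Schur-complement determinant formula $\det\bigl(\begin{smallmatrix}A & b\\ b' & c\end{smallmatrix}\bigr)=\det(A)\,(c-b'A^{-1}b)$. The Schur complement here is precisely $\Kcal(x_t,x_t)+a-k'_{t-1}(x_t)(K_{t-1}+aI)^{-1}k_{t-1}(x_t)=a+d_t$, so $\det(K_t+aI)=(a+d_t)\det(K_{t-1}+aI)$. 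Iterating from the empty matrix ($\det(K_0+aI)=1$) gives $\det(K_T+aI)=\prod_{t=1}^T(a+d_t)$, hence $\det(I+\tfrac1a K_T)=\prod_{t=1}^T(1+d_t/a)$ and $\ln\det(I+\tfrac1a K_T)=\sum_{t=1}^T\ln(1+d_t/a)$, which closes the argument.

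I expect the main obstacle to be the determinant step: one must set up the block decomposition with the correct sizes and indexing (in particular that $K_t+aI$ is obtained from $K_{t-1}+aI$ by bordering, and that the recursion terminates at the $0\times 0$ matrix) and correctly recognise the Schur complement as $a+d_t$. The two clipping inequalities and the scalar bound $u/(1+u)\le\ln(1+u)$ are routine once that identity is in hand.
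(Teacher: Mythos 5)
Your proposal is correct and takes essentially the same route as the paper: the same decomposition of each loss term via $1 = \frac{1}{1+d_t/a} + \frac{d_t/a}{1+d_t/a}$, the same two clipping bounds, the same scalar inequality $u/(1+u)\le\ln(1+u)$, and the same telescoping determinant identity. The determinant step you anticipate as the main obstacle is exactly the paper's Lemma~\ref{lem_proddt}, proved there via Frobenius's identity, which is the Schur-complement bordering formula you describe.
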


\begin{proof}
We have
\begin{equation*}
 \frac 1{1+d_t/a}=1-\frac{d_t/a}{1+d_t/a}
\end{equation*}
and
\begin{equation*}
\frac{d_t/a}{1+d_t/a}\le\ln(1+d_t/a)\enspace;
\end{equation*}
indeed, for $b\ge 0$ the inequality $b/(1+b)\le\ln(1+b)$ holds and can
be checked by differentiation. Therefore
\begin{align*}
\sum_{t=1}^T(\gamma^{\RR,Y}_t-y_t)^2 &=
\sum_{t=1}^T(\gamma^{\RR,Y}_t-y_t)^2\frac 1{1+d_t/a}+
\sum_{t=1}^T(\gamma^{\RR,Y}_t-y_t)^2\frac{d_t/a}{1+d_t/a}\\
&\le\sum_{t=1}^T(\gamma^{\RR}_t-y_t)^2\frac 1{1+d_t/a}+
4Y^2\sum_{t=1}^T\ln(1+d_t/a)\enspace.
\end{align*}
Lemma \ref{lem_proddt} proved below yields
\begin{equation*}
\prod_{t=1}^T(1+d_t/a)=\frac 1{a^T}\det(K_{T}+aI)=\det\left(I+\frac 1a
K_{T}\right) \enspace.
\end{equation*}
\end{proof}

There is no sub-linear upper bound on the regret term
$$4Y^2\ln\det\left(I+\frac 1a K_{T}\right)$$
in the general case; indeed,
consider the kernel
\begin{equation}
\delta(x_1,x_2)=
\label{eq_delta}
\begin{cases}
1 & \text{if $x_1=x_2$;}\\
0 & \text{otherwise.}
\end{cases}
\end{equation}
However we can  get good bounds in special  cases.

It is shown in \cite{kakade_seeger_long} that for the Gaussian kernel
$\Kcal(x_1,x_2)=e^{-b\|x_1-x_2\|^2}$, where $x_1,x_2\in\R^d$, we can
get an upper bound on average. Suppose that all $x$s are independently
identically distributed according to the Gaussian distribution with
the mean of 0 and variance of $cI$. Then for the expectation we have
$E\ln\det\left(I+\frac 1a K_{T}\right)=O((\ln T)^{d+1})$ (see
Section~IV.B in \cite{kakade_seeger_long}). This yields a bound on the
expected loss of clipped ridge regression.

Consider the linear kernel $\Kcal(x_1,x_2)=x'_1x_2$ defined on column
vectors from $\R^n$. We have $\Kcal(x,x)=\|x\|^2$, where $\|\cdot\|$
is the quadratic norm in $\R^n$. The reproducing kernel Hilbert space
is the set of all linear functions on $\R^n$. We have $K_t=X_t'X_t$,
where $X_{T}$ is the {\em design matrix} made up of column vectors
$x_1,x_2,\ldots,x_{T}$. The Sylvester determinant identity
$\det(I+UV)=\det(I+VU)$ (see, e.g., \cite{matan_henderson}, Eq.~(6))
implies that
\begin{equation*}
\det\left(I+\frac 1a X'_{T}X_{T}\right)=\det\left(I+
\frac 1a X_{T}X_{T}'\right)=
\det\left(I+\frac 1a\sum_{t=1}^{T}x_tx'_t\right)\enspace.
\end{equation*}
We will use an upper bound from \cite{cesa-bianchi_perceptron} for
this determinant; a proof is given in Appendix~\ref{appendix_cb_bound}
for completeness.  We get the following corollary.
\begin{corollary}
\label{corollary_linear}
For a sample $(x_1,y_1),(x_2,y_2),\ldots,(x_T,y_T)$, where $\|x_t\|\le
B$ and $y_t\in [-Y,Y]$ for all $t=1,2,\ldots,T$, let
$\gamma_1^{\RR,Y},\gamma_2^{\RR,Y},\ldots,\gamma_T^{\RR,Y}$ be the
predictions output by clipped linear ridge regression with a parameter
$a>0$ in the on-line mode. Then
\begin{equation*}
\sum_{t=1}^T(\gamma^{\RR,Y}_t-y_t)^2\le \min_{\theta\in\R^n}\left(
\sum_{t=1}^T(\theta'x_t-y_t)^2+a\|\theta\|^2\right)+
4Y^2n\ln\left(1+\frac{TB^2}{an}\right)\enspace.
\end{equation*}
\end{corollary}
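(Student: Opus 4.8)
The plan is to derive Corollary~\ref{corollary_linear} directly from Corollary~\ref{corollary_kernel} by specialising to the linear kernel and then bounding the log-determinant regret term. Since Corollary~\ref{corollary_kernel} already establishes
\[
\sum_{t=1}^T(\gamma^{\RR,Y}_t-y_t)^2\le \min_{f\in\Fcal}\left(\sum_{t=1}^T(f(x_t)-y_t)^2+a\|f\|^2_\Fcal\right)+4Y^2\ln\det\left(I+\frac 1a K_{T}\right),
\]
the task reduces to rewriting the two problem-dependent quantities in the linear case. For the linear kernel $\Kcal(x_1,x_2)=x'_1x_2$, the RKHS is the space of linear functions $f(x)=\theta'x$ with $\|f\|^2_\Fcal=\|\theta\|^2$, so the minimum over $f\in\Fcal$ becomes exactly $\min_{\theta\in\R^n}\left(\sum_{t=1}^T(\theta'x_t-y_t)^2+a\|\theta\|^2\right)$, matching the first term on the right-hand side of the target inequality.

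The remaining work is to bound $4Y^2\ln\det\left(I+\frac 1a K_{T}\right)$ by $4Y^2 n\ln\left(1+\frac{TB^2}{an}\right)$. First I would invoke the Sylvester identity, already displayed in the excerpt, to replace $K_T=X'_TX_T$ by the $n\times n$ matrix $\frac 1a\sum_{t=1}^T x_tx'_t$ inside the determinant, so that the determinant is now of size $n$ regardless of $T$. Then I would apply the upper bound from \cite{cesa-bianchi_perceptron} (proved in Appendix~\ref{appendix_cb_bound}) to this $n\times n$ positive-semidefinite matrix. The essential estimate is the standard one: if $A=I+\frac 1a\sum_{t=1}^T x_tx'_t$ has eigenvalues $\lambda_1,\ldots,\lambda_n$, then by the arithmetic–geometric mean inequality applied to $\ln\det A=\sum_{i}\ln\lambda_i$,
\[
\ln\det A=\sum_{i=1}^n\ln\lambda_i\le n\ln\left(\frac 1n\sum_{i=1}^n\lambda_i\right)=n\ln\left(\frac{\tr A}{n}\right).
\]
Since $\tr A=n+\frac 1a\sum_{t=1}^T\|x_t\|^2\le n+\frac{TB^2}{a}$ using $\|x_t\|\le B$, this gives $\ln\det A\le n\ln\left(1+\frac{TB^2}{an}\right)$, which is exactly the required regret bound after multiplying by $4Y^2$.

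I do not anticipate a genuine obstacle here, as the corollary is a specialisation-plus-estimation argument resting entirely on results already available: Corollary~\ref{corollary_kernel}, the Sylvester identity, and the appendix bound. The one point requiring care is the identification of the RKHS norm with the Euclidean norm of the coefficient vector $\theta$ for the linear kernel, and confirming that the representer-theorem reduction used elsewhere does not restrict the minimisation in a way that changes the value; but for the linear kernel the correspondence $f\leftrightarrow\theta$ is exact, so the middle term transfers verbatim. The bound from \cite{cesa-bianchi_perceptron} may be stated in a slightly different but equivalent form, so the only real step is to match its hypotheses (boundedness $\|x_t\|\le B$) to the statement and cite Appendix~\ref{appendix_cb_bound} for the proof.
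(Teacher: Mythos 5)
Your proposal is correct and follows essentially the same route as the paper: Corollary~\ref{corollary_kernel} specialised to the linear kernel (with the RKHS norm identified as $\|\theta\|^2$), the Sylvester identity to pass to an $n\times n$ determinant, and the trace/AM--GM bound of Appendix~\ref{appendix_cb_bound} to obtain $n\ln\left(1+\frac{TB^2}{an}\right)$. No gaps.
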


It is an interesting problem if the bound is optimal. As far as we
know, there is a gap in existing bounds. Theorem~2 in \cite{vovk_cols}
shows that $Y^2n\ln T$ is a lower bound for {\em any} learner and in
the constructed example, $\|x_t\|_\infty=1$. Theorem~3 in
\cite{vovk_cols} provides a stronger lower bound, but at the cost of
allowing unbounded $x$s.

\subsection{An Asymptotic Comparison}

The inequalities we have considered so far hold for finite time
horizons $T$. We shall now let $T$ tend to infinity.

Let us analyse the behaviour of the quantity
\begin{equation*}
d_t=\Kcal(x_{t},x_t)-k'_{t-1}(x_t)(K_{t-1}+aI)^{-1}k_{t-1}(x_t)\enspace.
\end{equation*}
According to the probabilistic interpretation discussed in
Subsection~\ref{subsect_Gauss}, $d_t$ has the meaning of the variance
of the prediction output by kernel ridge regression on step $t$ and
therefore it is always non-negative.

The probabilistic interpretation suggests that the variance should go
down with time as we learn the data better. In the general case this
is not true. Indeed, if $\Kcal=\delta$ defined by (\ref{eq_delta}) and
all $x_i$ are different then $d_t=1$ for all $t=1,2,\ldots$ However
under natural assumptions that hold in most reasonable cases the
following lemma holds. The lemma generalises Lemma~A.1 from
\cite{regression_kumon} because for the linear kernel $d_t$ can be can
be represented as shown in (\ref{dt_linear}) below.

\begin{lemma}
\label{lemma_dt_to_0}
Let $X$ be a compact metric space and a kernel $\Kcal: X^2\to\R$ be
continuous in both arguments. Then for any sequence $x_1,x_2,\ldots\in
X$ and $a>0$ we have $d_t\to 0$ as $t\to\infty$.
\end{lemma}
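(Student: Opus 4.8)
The plan is to reinterpret $d_t$ as a regularised distance in the RKHS and then exploit compactness. First I would record the variational identity underlying the probabilistic interpretation: writing $\phi(x)=\Kcal(x,\cdot)\in\Fcal$, so that $\langle\phi(x),\phi(x')\rangle_\Fcal=\Kcal(x,x')$ by the reproducing property, a direct computation shows that for every coefficient vector $\alpha=(\alpha_1,\dots,\alpha_{t-1})'$,
\begin{equation*}
\Bigl\|\phi(x_t)-\sum_{i=1}^{t-1}\alpha_i\phi(x_i)\Bigr\|_\Fcal^2+a\|\alpha\|^2
=\Kcal(x_t,x_t)-2\alpha'k_{t-1}(x_t)+\alpha'(K_{t-1}+aI)\alpha ,
\end{equation*}
and minimising the right-hand side over $\alpha$ (the minimiser being $(K_{t-1}+aI)^{-1}k_{t-1}(x_t)$) returns exactly $d_t$. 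Hence
\begin{equation*}
d_t=\min_{\alpha\in\R^{t-1}}\Bigl(\Bigl\|\phi(x_t)-\sum_{i=1}^{t-1}\alpha_i\phi(x_i)\Bigr\|_\Fcal^2+a\|\alpha\|^2\Bigr),
\end{equation*}
so any concrete choice of $\alpha$ yields an upper bound on $d_t$, which is all I shall need.

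Next I would use compactness. The map $\phi$ is continuous from $X$ into $\Fcal$ because $\|\phi(x)-\phi(x')\|_\Fcal^2=\Kcal(x,x)-2\Kcal(x,x')+\Kcal(x',x')\to0$ as $x'\to x$, by continuity of $\Kcal$. Therefore $\phi(X)$ is a compact, hence totally bounded, subset of $\Fcal$, and in particular $\{\phi(x_i)\}_{i\ge1}$ can be covered, for each $\epsilon>0$, by finitely many balls $B_1,\dots,B_N$ of radius $\epsilon$, where $N=N(\epsilon)$.

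The crux is then a counting argument guaranteeing, for all but finitely many $t$, many earlier signals in the same ball as $x_t$. Fixing $\epsilon$ and an integer $M$, assign to each $t$ an index $j(t)$ with $\phi(x_t)\in B_{j(t)}$, and let $n_j(\tau)$ count the $s\le\tau$ with $j(s)=j$. For a fixed ball $j$ the times $t$ with $j(t)=j$ and $n_j(t-1)<M$ are at most $M$ in number (listing these times in increasing order, the count of earlier members takes the values $0,1,\dots,M-1$ at most once each), so across all balls at most $NM$ values of $t$ fail to satisfy $n_{j(t)}(t-1)\ge M$. For every $t$ beyond these finitely many exceptions there are indices $s_1,\dots,s_M<t$ with every $\phi(x_{s_k})$ in the same radius-$\epsilon$ ball as $\phi(x_t)$; taking $\alpha$ supported on them with each entry $1/M$ and using convexity of the norm gives
\begin{equation*}
d_t\le\Bigl\|\phi(x_t)-\frac1M\sum_{k=1}^M\phi(x_{s_k})\Bigr\|_\Fcal^2+\frac aM
\le(2\epsilon)^2+\frac aM .
\end{equation*}

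Finally I would conclude by choosing parameters: given $\eta>0$, pick $\epsilon$ with $4\epsilon^2\le\eta/2$ and then $M\ge 2a/\eta$, so that $d_t\le\eta$ for all $t$ past the finitely many exceptional indices; since $\eta$ is arbitrary, $d_t\to0$. I expect the main obstacle to be the averaging step rather than the topology: a single nearby previous signal does not suffice, since using one coefficient equal to $1$ leaves an irreducible penalty $a$, so one genuinely needs many near-copies of $x_t$ occurring before time $t$ in order to spread the coefficient mass and drive $a\|\alpha\|^2$ to zero — which is exactly what the pigeonhole count supplies.
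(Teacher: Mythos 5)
Your proof is correct, and it takes a genuinely different route from the paper's. The paper argues by contradiction: it first handles the special case of a convergent sequence $x_t\to x_0$ by lower-bounding the quadratic form $k'_{t-1}(x_t)(K_{t-1}+aI)^{-1}k_{t-1}(x_t)$ via the smallest eigenvalue of $(K_{t-1}+aI)^{-1}$ (controlled through the trace of $K_{t-1}$), and then treats a general sequence by extracting, from any subsequence along which $d_t$ stays separated from $0$, a sub-subsequence with convergent signals, rearranging the kernel matrix, and invoking a partitioned-matrix (Schur complement) inequality, Lemma~\ref{lemma_partition}, to reduce to the special case. You instead work directly: you identify $d_t$ with the regularised best-approximation error $\min_\alpha\bigl(\|\phi(x_t)-\sum_i\alpha_i\phi(x_i)\|_\Fcal^2+a\|\alpha\|^2\bigr)$ (a correct and easily verified completion of the square), use continuity of the feature map and compactness of $X$ to get total boundedness of $\phi(X)$, and then a pigeonhole count over a finite $\varepsilon$-net to find, for all but at most $N(\varepsilon)M$ indices $t$, at least $M$ earlier near-copies of $x_t$, over which spreading the coefficient mass gives $d_t\le 4\varepsilon^2+a/M$. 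Your argument is shorter, avoids both the subsequence extraction and the partitioned-matrix machinery, and is quantitative: it bounds the number of exceptional steps and the size of $d_t$ explicitly in terms of the covering numbers of $\phi(X)$, which also makes transparent that total boundedness of the feature image is the operative hypothesis (consistent with the paper's counterexample on the non-compact unit sphere of $l_2$). The one point worth making explicit is that your averaging step needs the full strength of the hypothesis that $\Kcal$ is continuous in both arguments jointly (so that $x\mapsto\Kcal(x,x)$ is continuous and $\phi$ is norm-continuous), but the paper's own proof uses exactly the same fact, so nothing is lost.
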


\begin{proof}
As discussed in Subsection~\ref{subsect_Gauss}, $d_t$ has the meaning
of a dispersion under a certain probabilistic interpretation and
therefore $d_t\ge 0$. One can easily see that
$k'_{t-1}(x_t)(K_{t-1}+aI)^{-1}k_{t-1}(x_t)\ge 0$. Indeed, the matrix
$(K_{t-1}+aI)^{-1}$ is positive-definite as the inverse of a
positive-definite. We get
\begin{equation}
\label{eq_dt_pos}
0\le k'_{t-1}(x_t)(K_{t-1}+aI)^{-1}k_{t-1}(x_t)\le \Kcal(x_t,x_t)\enspace.
\end{equation}

Let us start by considering a special case. Suppose that the sequence
$x_1,x_2,\ldots$ converges. Let $\lim_{t\to\infty}x_t=x_0\in X$. The
continuity of $\Kcal$ implies that $\Kcal(x_t,x_t)\to \Kcal(x_0,x_0)$
and
\begin{equation}
\label{eq_dt_pos_o}
0\le k'_{t-1}(x_t)(K_{t-1}+aI)^{-1}k_{t-1}(x_t)\le \Kcal(x_0,x_0)+o(1)
\end{equation}
as $t\to\infty$.  The positive-semidefiniteness clause in the
definition of a kernel implies that $\Kcal(x_0,x_0)\ge 0$. We will
obtain a lower bound on $k'_{t-1}(x_t)(K_{t-1}+aI)^{-1}k_{t-1}(x_t)$
and show that it must converge to $\Kcal(x_0,x_0)$ thus proving the
lemma in the special case.

For every symmetric positive-semidefinite matrix $M$ and a vector $x$
of the matching size we have $\lambda_\mathrm{min}\|x\|^2\le x'Mx$, where
$\lambda_\mathrm{min}\ge 0$ is the smallest eigenvalue of $M$ (this can be
shown by considering the orthonormal base where $M$
diagonalises). The smallest eigenvalue of $(K_{t-1}+aI)^{-1}$ equals
$1/(\tilde\lambda+a)$, where $\tilde\lambda$ is the largest eigenvalue
of $K_{t-1}$. The value of $\tilde\lambda$ is bounded from above by
the trace of $K_{t-1}$:
\begin{equation*}
\tilde\lambda\le \sum_{i=1}^{t-1}\Kcal(x_i,x_i)
\end{equation*}
and this yields a lower bound on the smallest eigenvalue of
$(K_{t-1}+aI)^{-1}$.

The squared norm of $k_{t-1}$ equals
\begin{equation*}
\|k_{t-1}(x_t)\|^2 = \sum_{i=1}^{t-1}(\Kcal(x_i,x_t))^2\enspace.
\end{equation*}
Combining this with the above estimates we get
\begin{equation*}
k'_{t-1}(x_t)(K_{t-1}+aI)^{-1}k_{t-1}(x_t)\ge 
\frac{\sum_{i=1}^{t-1}(\Kcal(x_i,x_t))^2}{a+\sum_{i=1}^{t-1}\Kcal(x_i,x_i)}\enspace.
\end{equation*}
Let us assume $\Kcal(x_0,x_0)\ne 0$ and show that the right-hand size
of the inequality tends to $\Kcal(x_0,x_0)$ (if $\Kcal(x_0,x_0)=0$,
then (\ref{eq_dt_pos_o}) implies that
$k'_{t-1}(x_t)(K_{t-1}+aI)^{-1}k_{t-1}(x_t)\to 0$). Dividing the
numerator and the denominator by $t-1$ yields
\begin{equation*}
\frac{\sum_{i=1}^{t-1}(\Kcal(x_i,x_t))^2}{a+\sum_{i=1}^{t-1}\Kcal(x_i,x_i)}=
\frac{\frac{\sum_{i=1}^{t-1}(\Kcal(x_i,x_t))^2}{t-1}}
{\frac{a}{t-1}+\frac{\sum_{i=1}^{t-1}\Kcal(x_i,x_i)}{t-1}}\enspace.
\end{equation*}
Clearly, as $t$ goes to infinity, most terms in the sums become
arbitrary close to $(\Kcal(x_0,x_0))^2$ and $\Kcal(x_0,x_0)$ and thus
the `averages' tend to $(\Kcal(x_0,x_0))^2$ and $\Kcal(x_0,x_0)$,
respectively. Therefore the fraction tends to $\Kcal(x_0,x_0)$ and 
(\ref{eq_dt_pos_o}) implies that
$k'_{t-1}(x_t)(K_{t-1}+aI)^{-1}k_{t-1}(x_t)\to \Kcal(x_0,x_0)$. We
have shown that $d_t\to 0$ for the special case when the sequence
$x_1,x_2,\ldots$ converges.

Let us prove the lemma for an arbitrary sequence $x_1,x_2,\ldots\in X$. 
If $d_t\not\to 0$, there is a subsequence of indexes
$\tau_1<\tau_2<\ldots<\tau_k<\ldots$ such that $d_{\tau_k}$ is separated from
0. Since $X$ is compact, we can choose a sub-subsequence
$t_1<t_2<\ldots<t_n<\ldots$ such that $d_{t_n}$ is separated from 0
and $x_{t_n}$ converges. If we show that $d_{t_n}\to 0$, we get a
contradiction and prove the lemma. Thus it is sufficient to show that
$d_{t_n}\to 0$ where $\lim_{n\to\infty}x_{t_n}=x_0\in X$.

Clearly, we have inequalities (\ref{eq_dt_pos_o}) for $t=t_n$:
\begin{equation}
\label{eq_dt_sub}
0\le k'_{t_n-1}(x_{t_n})(K_{t_n-1}+aI)^{-1}k_{t_n-1}(x_{t_n})\le
\Kcal(x_0,x_0)+o(1)\enspace. 
\end{equation}
We proceed by obtaining a lower bound on the middle term as before.

Fix some $n$ and the corresponding $t_n$. One can rearrange the order of
the elements of the finite sequence $x_1,x_2,\ldots,x_{t_n-1}$ to put
the elements of the subsequence to the front and consider the
sequence (still of length $t_n-1$) $x_{t_1},x_{t_2},\ldots,x_{t_{n-1}},\bar x_1,\bar
x_2,\ldots,\bar x_{t_n-n}$, where $\bar x_1,\bar x_2,\ldots,\bar
x_{t_n-n}$ are the elements of the original sequence with indexes not
in the set $\{t_1,t_2,\ldots,t_{n-1}\}$.

One can write
\begin{equation*}
k'_{t_n-1}(x_{t_n})(K_{t_n-1}+aI)^{-1}k_{t_n-1}(x_{t_n})=
\tilde k'_{t_n-1}(x_{t_n})(\widetilde{K}_{t_n-1}+aI)^{-1}\tilde
k_{t_n-1}(x_{t_n}) \enspace,
\end{equation*}
where
\begin{equation*}
\tilde k'_{t_n-1}(x_{t_n}) =
\begin{pmatrix}
\Kcal(x_{t_1},x_{t_n})\\
\vdots\\
\Kcal(x_{t_{n-1}},x_{t_n})\\
\Kcal(\bar x_{1},x_{t_n})\\
\vdots\\
\Kcal(\bar x_{t_n-n},x_{t_n})\\
\end{pmatrix}
\end{equation*}
and
\begin{multline*}
\widetilde{K}_{t_n-1}=\\
\begin{pmatrix}
\Kcal(x_{t_1},x_{t_1}) & \ldots & \Kcal(x_{t_1},x_{t_{n-1}}) &
\Kcal(x_{t_1},\bar x_{1}) & \ldots & \Kcal(x_{t_1},\bar x_{t_n-n})\\
\vdots & & \vdots & \vdots & & \vdots\\
\Kcal(x_{t_{n-1}},x_{t_1}) & \ldots & \Kcal(x_{t_{n-1}},x_{t_{n-1}}) &
\Kcal(x_{t_{n-1}},\bar x_{1}) & \ldots & \Kcal(x_{t_{n-1}},\bar x_{t_n-n})\\
\Kcal(\bar x_1,x_{t_1}) & \ldots & \Kcal(\bar x_1,x_{t_{n-1}}) &
\Kcal(\bar x_1,\bar x_{1}) & \ldots & \Kcal(\bar x_1,\bar x_{t_n-n})\\
\vdots & & \vdots & \vdots & & \vdots\\
\Kcal(\bar x_{t_n-n},x_{t_1}) & \ldots & \Kcal(\bar x_{t_n-n},x_{t_{n-1}}) &
\Kcal(\bar x_{t_n-n},\bar x_{1}) & \ldots & \Kcal(\bar x_{t_n-n},\bar x_{t_n-n})\\
\end{pmatrix}.
\end{multline*}
Indeed, we can consider the matrix product
$k_{t_n-1}(x_{t_n})(K_{t_n-1}+aI)^{-1}k_{t_n-1}(x_{t_n})$ in the
rearranged orthonormal base where the base vectors with the indexes
$t_1,t_2,\ldots,t_{n-1}$ are at the front of the list. (Alternatively
one can check that rearranging the elements of the training set does
not affect ridge regression prediction and its variance.)

The upper left corner of $\widetilde{K}_{t_n-1}$ and the upper part of
$\tilde k_{t_n-1}(x_{t_n})$ consist of values of the kernel on
elements of the subsequence, $\Kcal(x_{t_i},x_{t_j}),
i,j=1,2,\ldots,n$. We shall use this observation and reduce the proof
to the special case considered above.

Let us single out the left upper corner of size $(n-1)\times (n-1)$ in
$\widetilde{K}_{t_n-1}+aI$ and apply Lemma~\ref{lemma_partition} from
Appendix~\ref{appendix_cb_bound}.  The special case considered above
implies that $\tilde
k'_{t_n-1}(x_{t_n})(\widetilde{K}_{t_n-1}+aI)^{-1}\tilde
k_{t_n-1}(x_{t_n})\ge\Kcal(x_0,x_0)+o(1)$ as $n\to\infty$. Combined
with (\ref{eq_dt_sub}) this proves that $d_{t_n}\to 0$ as $n\to 0$.

\end{proof}

We shall apply this lemma to establish asymptotic equivalence between
the losses in on-line and batch cases. The following corollary
generalises Corollary~3 from \cite{abs-0910-4683}.

\begin{corollary}\label{cor:asympotic}
Let $X$ be a compact metric space and a kernel $\Kcal: X^2\to\R$ be
continuous in both arguments; let $\Fcal$ be the RKHS corresponding to
the kernel $\Kcal$. For a sequence $(x_1,y_1), (x_2,y_2),\ldots\in
X\times\R$ let $\gamma_t^\RR$ be the predictions output by on-line
ridge regression with a parameter $a>0$. Then
\begin{enumerate}
\item if there is $f\in\Fcal$ such that $\sum_{t=1}^{\infty}(y_t -
  f(x_t))^2< +\infty$ then
$$
\sum_{t=1}^{\infty} (y_t - \gamma_t^\RR)^2 < +\infty\enspace;
$$
\item if for all $f\in\Fcal$ we have $\sum_{t=1}^{\infty} (y_t -
  f(x_t))^2=+\infty$, then
\begin{equation}\label{eq:conclusion}
\lim_{T\to\infty}
\frac{\sum_{t=1}^T(y_t - \gamma_t^\RR)^2}{\min_{f\in\Fcal}
\left(\sum_{t=1}^T(y_t - f(x_t))^2 + a\|f\|_\Fcal^2\right)}=1\enspace.
\end{equation}
\end{enumerate}
\end{corollary}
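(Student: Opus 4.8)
The plan is to reduce both parts to the identity of Theorem~\ref{th_main} together with the fact that $d_t\to 0$ (Lemma~\ref{lemma_dt_to_0}). Write $L_T=\sum_{t=1}^T(y_t-\gamma_t^\RR)^2$ for the on-line loss and $M_T=\min_{f\in\Fcal}\left(\sum_{t=1}^T(y_t-f(x_t))^2+a\|f\|^2_\Fcal\right)$ for the regularised retrospective loss, so that the identity reads $\sum_{t=1}^T(y_t-\gamma_t^\RR)^2/(1+d_t/a)=M_T$. Since $X$ is compact and $\Kcal$ is continuous, $x\mapsto\Kcal(x,x)$ is bounded, so there is $c_\Fcal$ with $\Kcal(x,x)\le c_\Fcal^2$ and the multiplicative bound~(\ref{bound_mult}) applies, giving $M_T\le L_T\le(1+c_\Fcal^2/a)M_T$. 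Note also that $M_T$ is non-decreasing in $T$, since enlarging the sample can only increase the value minimised.

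For part~1, fix $f\in\Fcal$ with $\sum_{t=1}^\infty(y_t-f(x_t))^2<+\infty$. Substituting this $f$ into the definition of $M_T$ bounds $M_T$ by the finite constant $\sum_{t=1}^\infty(y_t-f(x_t))^2+a\|f\|^2_\Fcal$, uniformly in $T$. The multiplicative bound then yields a uniform bound on $L_T$; since $L_T$ is non-decreasing, it converges, which is exactly the claim.

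For part~2 the first and main task is to show that $M_T\to+\infty$. Suppose instead that $M_T\to M_\infty<+\infty$. Let $f_T$ be the batch ridge regression function attaining $M_T$; then $a\|f_T\|^2_\Fcal\le M_T\le M_\infty$, so the $f_T$ lie in a fixed ball of $\Fcal$. A closed ball in a Hilbert space is weakly compact, so some subsequence $f_{T_k}$ converges weakly to an $f^\ast\in\Fcal$; by the reproducing property (taking the scalar product against $\Kcal(x,\cdot)$) this gives pointwise convergence $f_{T_k}(x)\to f^\ast(x)$. For any fixed $S$ and all $T_k\ge S$ we have $\sum_{t=1}^S(y_t-f_{T_k}(x_t))^2\le M_{T_k}\le M_\infty$, and letting $k\to\infty$ (the sum has finitely many terms) gives $\sum_{t=1}^S(y_t-f^\ast(x_t))^2\le M_\infty$; letting $S\to\infty$ yields $\sum_{t=1}^\infty(y_t-f^\ast(x_t))^2\le M_\infty<+\infty$, contradicting the hypothesis of part~2. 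Hence $M_T\to+\infty$. This weak-compactness step is the part I expect to require the most care; the rest is routine.

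Finally, set $a_t=(y_t-\gamma_t^\RR)^2/(1+d_t/a)\ge 0$, so that $\sum_{t=1}^T a_t=M_T$ by the identity and the partial sums diverge by the previous paragraph. Since $(y_t-\gamma_t^\RR)^2=a_t(1+d_t/a)$, we have $L_T=M_T+\sum_{t=1}^T a_t(d_t/a)$ and therefore, for all $T$ large enough that $M_T>0$,
\begin{equation*}
\frac{L_T}{M_T}=1+\frac{\sum_{t=1}^T a_t(d_t/a)}{\sum_{t=1}^T a_t}\enspace.
\end{equation*}
The last fraction is a weighted average of the quantities $d_t/a$, which tend to $0$ by Lemma~\ref{lemma_dt_to_0}, with non-negative weights $a_t$ whose partial sums diverge; by a standard fact (the Toeplitz lemma, verified directly by splitting the sum at a fixed index) such a weighted average tends to $0$. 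Hence $L_T/M_T\to 1$, which is precisely~(\ref{eq:conclusion}).
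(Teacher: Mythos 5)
Your proof is correct, and its overall architecture matches the paper's: Part~1 via the multiplicative bound~(\ref{bound_mult}), and Part~2 by first establishing that the lower bound of the ratio is $1$, then proving $M_T\to+\infty$, and finally combining the identity of Theorem~\ref{th_main} with $d_t\to 0$ from Lemma~\ref{lemma_dt_to_0}. The one place where you genuinely diverge is the proof that $M_T\to+\infty$. The paper assumes $m_\infty<+\infty$ and then shows that the minimisers $f_t$ all lie in an explicitly constructed \emph{norm}-compact subset of $\Fcal$ --- a Hilbert-cube-like set built from the estimate $\|h\|^2_\Fcal\le(m_\infty-m_s)/a$ on the component of $f_t$ orthogonal to the span of the first $s$ representers --- extracts a norm-convergent subsequence, and passes to the limit. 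You instead observe that the $f_T$ are norm-bounded, invoke weak sequential compactness of bounded sets in a Hilbert space (Eberlein--\v{S}mulian, or a diagonal argument in the separable closed span of the representers), and note that weak convergence implies pointwise convergence in an RKHS because $f(x)=\langle f,\Kcal(x,\cdot)\rangle_\Fcal$; since your limiting argument only ever evaluates the limit function at finitely many points at a time, weak convergence suffices. This is a genuine simplification: it eliminates the most laborious construction in the paper's proof, at the price of citing a less elementary compactness principle than total boundedness. Your final step --- writing $L_T/M_T-1$ as a weighted average of the quantities $d_t/a$ with nonnegative weights whose partial sums equal $M_T\to+\infty$ and applying the Toeplitz lemma --- is an equivalent repackaging of the paper's split of the sum at a threshold $T_0$ beyond which $d_t/a\le\varepsilon$; both are fine, and your observation that $M_T>0$ for large $T$ disposes of the only degenerate case.
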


\begin{proof}
Part~1 follows from bound~(\ref{bound_mult}).  Indeed, the continuous
function $\Kcal(x,x)$ is uniformly bounded on $X$ and one can take a
finite constant $c_\Fcal$.

Let us prove Part~2.  We observed above that $d_t\ge 0$. The identity
implies
\begin{equation*}
\sum_{t=1}^T(y_t - \gamma_t^\RR)^2\ge \sum_{t=1}^T\frac{(y_t -
  \gamma_t^\RR)^2}{1+d_t/a}=\min_{f\in\Fcal}
\left(\sum_{t=1}^T(y_t - f(x_t))^2 + a\|f\|_\Fcal^2\right)
\end{equation*}
and thus the fraction in (\ref{eq:conclusion}) is always greater than
or equal to 1.

Let $m_t=\min_{f\in\Fcal} \left(\sum_{\tau=1}^t(y_\tau - f(x_\tau))^2
+ a\|f\|_\Fcal^2\right)$. The sequence $m_t$ is non-decreasing. Indeed
let the minimum in the definition of $m_t$ be achieved on $f_t$. If
$m_{t+1}<m_t$ then one can substitute $f_{t+1}$ into the definition of
$m_t$ and decrease the minimum. 


Let us prove that $m_t\to+\infty$ as $t\to\infty$.  A monotonic
sequence must have a limit; let $\lim_{t\to\infty}m_t=m_\infty$. We
have $m_1\le m_2\le\ldots\le m_\infty$.  We will assume that
$m_\infty<+\infty$ and show that there is $f_\infty\in\Fcal$ such that
$\sum_{t=1}^{\infty} (y_t - f_\infty(x_t))^2\le m_\infty<+\infty$
contrary to the condition of Part~2.

Proposition~\ref{prop_KRR} implies that $f_t(\cdot)$ is the ridge
regression function and belongs to the linear span of
$\Kcal(x_i,\cdot)$, $i=1,2,\ldots,t$, which we will denote by $X_t$.
(For uniformity let $X_0=\varnothing$ and $m_0=0$.)  The squared norm
of $f_t$ does not exceed $m_t/a\le m_\infty/a$. Thus all $f_t$ belong
to the ball of radius $\sqrt{m_\infty/a}$ centred at the origin.

Let $X_\infty$ be the closure of the linear span of
$\bigcup_{t=0}^{\infty}X_t$.  If $X_\infty$ happens to have a finite
dimension, then all $f_t$ belong to a ball in a finite-dimensional
space; this ball is a compact set. If $X_\infty$ is of infinite dimension,
the ball is not compact but we will construct a different compact set
containing all $f_t$.

Take $0\le s<t$. The function $f_t$ can be uniquely decomposed as
$f_t=g+h$, where $g$ belongs to $X_s$ and $h$ is orthogonal to $X_s$.
The Pythagoras theorem implies that
$\|f\|_\Fcal^2=\|g\|_\Fcal^2+\|h\|_\Fcal^2$.  The function $h(\cdot)$
is orthogonal to all $\Kcal(x_i,\cdot)$, $i=1,2,\ldots,s$; thus
$h(x_i)=\langle h,\Kcal(x_i,\cdot)\rangle=0$ and $f_t(x_i)=g(x_i)$,
$i=1,2,\ldots,s$ (recall the proof of the representer theorem,
Proposition~\ref{prop_representer}). Note that $g$ cannot outperform
$f_s$, which achieves the minimum $m_s$. We get
\begin{align*}
m_t &= \sum_{i=1}^t(y_i-f_t(x_i))^2+a\|f_t\|^2_\Fcal\\
&= \sum_{i=1}^s(y_i-g(x_i))^2 + a\|g\|^2_\Fcal +
\sum_{i=s+1}^t(y_i-f_t(x_i))^2+a\|h\|^2_\Fcal\\
&\ge m_s +\sum_{i=s+1}^t(y_i-f_t(x_i))^2+a\|h\|^2_\Fcal\enspace.
\end{align*}
This inequality implies that $\|h\|^2_\Fcal \le (m_t-m_s)/a\le (m_\infty-m_s)/a$.

Consider the set $B$ of functions $f\in X_\infty\subseteq\Fcal$ satisfying
the following property for every $s=0,1,2,\ldots$: let $f=g+h$ be the
unique decomposition such that $g\in X_s$ and $h$ is orthogonal to
$X_s$; then the norm of $h$ satisfies $\|h\|_\Fcal^2\le (m_\infty-m_s)/a$.

We have shown that all $f_t$ belong to $B$, $t=1,2,\ldots$ Let us show
that $B$ is compact. It is closed because the projection in Hilbert
spaces is a continuous operator. Let us show that $B$ is totally
bounded. We shall fix $\varepsilon>0$ and construct a finite
$\varepsilon$-net of points in $B$ such that $B$ is covered by closed
balls of radius $\varepsilon$ centred at the points from the net.

There is $s>0$ such that $(m_\infty-m_s)/a\le \varepsilon^2/2$ because
$m_s\to m_\infty$. The ball of radius $\sqrt{(m_\infty-m_0)/a}$ in
$X_s$ is compact and therefore it contains a finite
$\varepsilon/\sqrt{2}$-net $g_1,g_2,\ldots,g_k$.  Every $f\in B$ can
be represented as $f=g+h$, where $g$ belongs to $X_s$ and $h$ is
orthogonal to $X_s$. Since $\|g\|^2_\Fcal\le \|f\|^2_\Fcal \le
(m_\infty-m_0)/a$, the function $g$ belongs to the ball of radius
$\sqrt{(m_\infty-m_0)/a}$ in $X_s$ and therefore $\|g-g_i\|_\Fcal\le
\varepsilon/\sqrt{2}$ for some $g_i$ from the
$\varepsilon/\sqrt{2}$-net. The definition of $B$ implies that
$\|h\|^2_\Fcal\le (m_\infty-m_s)/a\le\varepsilon^2/2$. The Pythagoras
theorem yields
\begin{align*}
\|f-g_i\|_\Fcal^2&=\|g-g_i\|_\Fcal^2+\|h\|^2_\Fcal\\
&\le \varepsilon^2/2+\varepsilon^2/2\\
&=\varepsilon^2\enspace.
\end{align*}
Thus the net we have constructed is an $\varepsilon$-net for $B$.

Since the functions $f_t$ belong to a compact set, there is a
converging sub-sequence $f_{t_k}$; let
$\lim_{k\to\infty}f_{t_k}=f_\infty$. We have $\sum_{t=1}^\infty(y_t -
f_\infty(x_t))^2 + a\|f_\infty\|_\Fcal^2\le m_\infty$. Indeed, if
$\sum_{t=1}^\infty(y_t - f_\infty(x_t))^2 + a\|f_\infty\|_\Fcal^2>m_\infty$ then for a
sufficiently large $T_0$ we have $\sum_{t=1}^{T_0}(y_t - f_\infty(x_t))^2 +
a\|f_\infty\|_\Fcal^2>m_\infty$. Since $f_{t_k}\to f_\infty$ we get $f_{t_k}(x)\to
f_\infty(x)$ for all $x\in X$ and for sufficiently large $k$ all
$f_{t_k}(x_t)$ are sufficiently close to $f_\infty(x_t)$,
$t=1,2,\ldots,T_0$ and $\|f_{t_k}\|_\Fcal$ is sufficiently close to
$\|f_\infty\|_\Fcal$ so that $\sum_{t=1}^{T_0}(y_t - f_{t_k}(x_t))^2 +
a\|f_{t_k}\|_\Fcal^2>m_\infty$. We have proved that under the conditions of
Part~2 we have $m_t\to+\infty$ as $t\to \infty$.

Take $\varepsilon>0$. Since by Lemma~\ref{lemma_dt_to_0} we have $d_T\to 0$,
there is $T_0$ such that for all $T\ge T_0$ we have $1+d_T/a\le
1+\varepsilon$ and
\begin{align*}
\sum_{t=1}^T (y_t-\gamma_t^\RR)^2 &= \sum_{t=1}^{T_0} (y_t-\gamma_t^\RR)^2
+ \sum_{t=T_0+1}^T (y_t-\gamma_t^\RR)^2 \\
& \le \sum_{t=1}^{T_0}(y_t-\gamma_t^\RR)^2 + (1+\epsilon) \sum_{t=1}^T
\frac{(y_t-\gamma_t^\RR)^2}{1+d_t/a}\\
& = \sum_{t=1}^{T_0}(y_t-\gamma_t^\RR)^2 + (1+\epsilon)\min_{f\in\Fcal} \left(
\sum_{t=1}^T (y_t-f(x_t))^2 + a\left\|f\right\|_\Fcal^2 \right)\enspace.
\end{align*}
Therefore for all sufficiently large $T$
the fraction in (\ref{eq:conclusion}) does not exceed $1+2\varepsilon$.
\end{proof}

\begin{remark}
The proof of compactness above is based on the following general
result (cf.~\cite{brown_page_functional_analysis}, Chapter~4,
exercise~7 on p.~172). Let $B$ be a subset of $l_2$. Then $B$ is
totally bounded if and only if there is a sequence of nonnegative
numbers $\alpha_1,\alpha_2,\ldots \ge 0$ converging to 0, i.e.,
$\lim_{t\to\infty}\alpha_t=0$, such that for every $x =
(x_1,x_2,\ldots)\in B$ and every $t=1,2,\ldots$ the inequality
$\sum_{i=t}^{\infty}x_i^2\le\alpha(t)$ holds. This result generalises
the well-known construction of the Hilbert cube (also known as the
Hilbert brick).
\end{remark}

The corollary does not hold for a non-compact domain. Let us construct
a counterexample.

Let $X$ be the unit ball in $l_2$, i.e., $X=\{x\in l_2\mid
\|x\|_{l_2}=1\}$.
Let the kernel on $X$ be the scalar product in $l_2$, i.e., for
$u=(u_1,u_2,\ldots)$ and $v=(v_1,v_2,\ldots)$ from $X$ we have
$\Kcal(u,v)=\langle u,v\rangle_{l_2}=\sum_{i=1}^{\infty}u_iv_i$.

Consider the following sequence of elements $x_t\in X$. Let
$x_{2i-1}=x_{2i}$ have one at position $i$ and zeroes elsewhere,
$i=1,2,\ldots$ Consider the sequence of outcomes where odd elements
equal 1 and even elements equal 0, i.e., $y_{2i-1}=1$ and $y_{2i}=0$
for $i=1,2,\ldots$ We get \\
\begin{tabular}{lcl}
$t$ & $x_t$ & $y_t$\\
$1$ & $(1,0,0,\ldots)$ & 1\\
$2$ & $(1,0,0,\ldots)$ & 0\\
$3$ & $(0,1,0,\ldots)$ & 1\\
$4$ & $(0,1,0,\ldots)$ & 0\\
    & \vdots           &
\end{tabular}

Fix $a>0$. Let us work out the predictions $\gamma_t^\RR$ output by
on-line ridge regression on this sequence. The definition implies that
$\gamma_1^\RR=0$ and $\gamma_2^\RR=1/(1+a)$. To obtain further
predictions we need the following lemma stating that examples with
signals orthogonal to {\em all other} signals and $x_0$ where we want
to obtain a prediction can be dropped from the sample.

\begin{lemma}
Let $\Kcal: X\times X\to\R$ be a kernel on a domain $X$; let
$S=((x_1,y_1),(x_2,y_2),\ldots,(x_T,y_T))\in (X\times\R)^*$ be a
sample of pairs and let $x_0\in X$. If there is a subset
$(x_{i_1},y_{i_1}),(x_{i_2},y_{i_2}),\ldots,(x_{i_k},y_{i_k})$ of $S$
such that the signals of the examples from this subset are orthogonal
w.r.t.\ $\Kcal$ to all other signals, i.e., $\Kcal(x_{i_j},x_m)=0$ for
all $j=1,2,\ldots,k$ and $m\ne i_1,i_2,\ldots,i_k$, and orthogonal to
$x_0$ w.r.t.\ $\Kcal$, i.e., $\Kcal(x_{i_j},x_0)=0$ for
all $j=1,2,\ldots,k$, then all elements of this subset can be removed
from the sample $S$ without affecting the ridge regression prediction
$f_\RR(x_0)$. 
\end{lemma}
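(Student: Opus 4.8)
The plan is to reduce everything to the block structure that the two orthogonality conditions impose on the kernel matrix. Since the ridge regression prediction $f_\RR(x_0)=Y'(K+aI)^{-1}k(x_0)$ is manifestly invariant under a simultaneous permutation of the training examples---permuting the sample replaces $K$, $Y$, and $k(x_0)$ by $PKP'$, $PY$, and $Pk(x_0)$ for a permutation matrix $P$, and the factors $P'P=I$ cancel in the quadratic form (using $P'=P^{-1}$ and $PaIP'=aI$)---I would first reorder the sample so that the distinguished subset $(x_{i_1},y_{i_1}),\ldots,(x_{i_k},y_{i_k})$ occupies the first $k$ positions.

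After this reordering, write $K$, $Y$, and $k(x_0)$ in block form, with the first block of size $k$ corresponding to the subset and the second block of size $T-k$ to the remaining examples. The hypothesis $\Kcal(x_{i_j},x_m)=0$ for every $m$ outside the subset says precisely that the two off-diagonal blocks of $K$ vanish, so $K$ is block-diagonal, $K=\mathrm{diag}(K_A,K_B)$; hence so are $K+aI$ and its inverse, $(K+aI)^{-1}=\mathrm{diag}((K_A+aI)^{-1},(K_B+aI)^{-1})$. The hypothesis $\Kcal(x_{i_j},x_0)=0$ says that the first block of $k(x_0)$ is the zero vector, so $k(x_0)=(0,k_B(x_0))'$, where $k_B(x_0)$ collects the values $\Kcal(x_m,x_0)$ over the remaining signals.

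Multiplying out the quadratic form with these block expressions, the subset contributes $Y_A'(K_A+aI)^{-1}\cdot 0=0$, and what survives is exactly $Y_B'(K_B+aI)^{-1}k_B(x_0)$---the ridge regression prediction at $x_0$ computed from the sample with the subset removed. This is the desired conclusion.

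There is essentially no hard step here: the whole content is the observation that both hypotheses translate into vanishing blocks (the signal-to-signal condition killing the off-diagonal blocks of $K$, the signal-to-$x_0$ condition killing the corresponding block of $k(x_0)$), together with the fact that a block-diagonal matrix inverts blockwise. The only points I would treat explicitly are the permutation-invariance of the prediction, stated as above, and the degenerate case in which the subset exhausts the whole sample: there $k_B(x_0)$ is empty, and both the original prediction (since then $k(x_0)=0$ entirely) and the reduced prediction (the empty-sample convention $f_\RR(x_0)=0$) equal zero, so the statement still holds.
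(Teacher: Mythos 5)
Your proof is correct, but it takes a genuinely different route from the paper's. You work directly with the closed-form prediction $f_\RR(x_0)=Y'(K+aI)^{-1}k(x_0)$: after a permutation (whose harmlessness you rightly justify via $P'=P^{-1}$), the signal-to-signal orthogonality makes $K$ block-diagonal, so $(K+aI)^{-1}$ inverts blockwise, and the signal-to-$x_0$ orthogonality zeroes the block of $k(x_0)$ paired with the subset; the quadratic form then collapses to the prediction from the reduced sample. The paper instead argues variationally: it invokes the optimality characterisation of ridge regression (Proposition~\ref{prop_KRR}) and the representer theorem, decomposes a candidate minimiser as $f=f_1+f_2$ with $f_1$ in the span of the subset's kernel functions and $f_2$ in the span of the rest, observes that the regularised empirical risk splits into two independent terms so the minimisation decouples, and uses $f_1(x_0)=0$ to conclude. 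Your matrix argument is shorter and more elementary --- it needs nothing beyond the explicit formula by which the paper actually \emph{defines} ridge regression, plus blockwise inversion --- and it handles the degenerate case cleanly. The paper's functional argument is coordinate-free and exposes the structural reason the lemma holds (the objective decouples over orthogonal subspaces of the RKHS), which fits the variational machinery used elsewhere in the paper; but as a verification of the stated lemma your route is entirely adequate and, if anything, more self-contained.
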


\begin{proof} Let the subset coincide with the whole of $S$. Then
$k(x_0)=0$ and the ridge regression formula implies that ridge
regression outputs $\gamma=0$. Dropping the whole sample $S$ leads
to the same prediction by definition. For the rest of the proof
assume that the subset is proper.

The main part of the proof relies on the optimality of ridge
regression given by Proposition~\ref{prop_KRR}.  Let $\Fcal$ be the
RKHS of functions on $X$ corresponding to $\Kcal$.  The ridge
regression function for the sample $S$ minimises
$\sum_{t=1}^T(f(x_t)-y_t)^2+a\|f\|^2_\Fcal$ and by the representer
theorem (Proposition~\ref{prop_representer}) it is a linear
combination of $\Kcal(x_i,\cdot)$, $i=1,2,\ldots,T$.

Let us represent a linear combination $f$ as $f_1+f_2$, where $f_1$ is
a linear combination of $\Kcal(x_{i_j},\cdot)$, $j=1,2,\ldots,k$
corresponding to signals from the subset and $f_2$ is a linear
combination of the remaining signals. The functions $f_1$ and $f_2$
are orthogonal in $\Fcal$ and this representation is unique. For every
$j=1,2,\ldots,k$ and $m\ne i_1,i_2,\ldots,i_k$ we have $f(x_{i_j}) =
f_1(x_{i_j})$ and $f(x_{m}) = f_2(x_{m})$ and therefore
\begin{multline*}
\sum_{t=1}^T(f(x_t)-y_t)^2+a\|f\|^2_\Fcal =\\
\sum_{j=1}^k(f_1(x_{i_j})-y_{i_j})^2+
\sum_{m\ne i_1,i_2,\ldots,i_k}(f_2(x_m)-y_m)^2+a\|f_1\|^2_\Fcal+a\|f_2\|^2_\Fcal
\enspace.
\end{multline*}

This expression splits into two terms depending only on $f_1$ and
$f_2$. We can minimise it independently over $f_1$ and $f_2$. Note
that $f_1(x_0)=0$ by assumption and therefore $f_\RR(x_0)=\tilde f_2(x_0)$,
where $\tilde f_2$ minimises
\begin{equation*}
\sum_{m\ne i_1,i_2,\ldots,i_k}(f(x_m)-y_m)^2+a\|f\|^2_\Fcal
\end{equation*}
over $\Fcal$. The optimality property implies that $\tilde f_2$ is the
ridge regression function for the smaller sample.
\end{proof}

The lemma implies that $\gamma_{2i-1}=\gamma_1=0$ and
$\gamma_{2i}=\gamma_2=1/(1+a)$ for all $i=1,2,\ldots$ It is easy to
see that Corollary~\ref{cor:asympotic} is violated. For $f_0=0$ we
have
\begin{equation*}
\frac{\sum_{t=1}^{2i}(\gamma_t^\RR-y_i)^2}{\sum_{t=1}^{2i}(f_0(x_t)-y_t)^2}=
\frac{i(1+1/(1+a)^2)}{i}=1+\frac{1}{(1+a)^2}>1\enspace.
\end{equation*}
The actual minimiser\footnote{It can easily be calculated but we do
  not really need it.} gives an even smaller denominator and an even
larger fraction.

We have shown that compactness is necessary in
Corollary~\ref{cor:asympotic}. It is easy to modify the counterexample
to show that compactness without the continuity of $\Kcal$ is not
sufficient. Indeed, take an arbitrary compact metric space $X$
containing an infinite sequence $\tilde x_0,\tilde x_1,\tilde
x_2,\ldots$ where $\tilde x_i\ne\tilde x_j$ for $i\ne j$.  Let $\Phi:
X\to l_2$ be such that $\tilde x_i$ is mapped to $x_i$ from the
counterexample for every $i=1,2,\ldots$ and $x_0$ is mapped to
0. Define the kernel $\Kcal$ on $X^2$ by $\Kcal(u,v)=\langle
\Phi(u),\Phi(v)\rangle_{l_2}$ (this kernel cannot be continuous).
Take $y_i$ as in the counterexample. All predictions and losses on
$(\tilde x_1,y_1),(\tilde x_2,y_2),\ldots$ will be as in the
counterexample with $\Kcal(x_0,\cdot)$ playing the part of $f_0$.

\section{Gaussian Fields and a Proof of the Identity}
\label{sect_proof}

We will prove the identity by means of a probabilistic interpretation
of ridge regression.

\subsection{Probabilistic Interpretation}
\label{subsect_Gauss}

Suppose that we have a Gaussian random field\footnote{We use the term
  `field' rather than `process' to emphasise the fact that $X$ is not
  necessarily a subset of $\R$ and its elements do not have to be
  moments of time; some textbooks still use the word `process' in this
  case.} $z_x$ with the means of 0 and the covariances
$\cov(z_{x_1},z_{x_2})=\Kcal(x_1,x_2)$. Such a field exists. Indeed,
for any finite set of $x_1,x_2,\ldots,x_T$ our requirements imply the
Gaussian distribution with the mean of 0 and the covariance matrix of
$K$. These distributions satisfy the consistency requirements and thus
the Kolmogorov extension (or existence) theorem (see, e.g.,
\cite{processes_lamperti}, Appendix~1 for a proof
sketch\footnote{Strictly speaking, we do not need to construct the
  field for the whole $X$ in order to prove the theorem; is suffices
  to consider a finite-dimensional Gaussian distribution of
  $(z_{x_1},z_{x_2},\ldots,z_{x_T})$.}) can be applied to construct a
field over $X$.

Let $\varepsilon_x$ be a Gaussian field of mutually independent and
independent of $z_x$ random values with the mean of 0 and variance
$\sigma^2$. The existence of such a field can be shown using the same
Kolmogorov theorem. Now let $y_x=z_x+\varepsilon_x$. Intuitively,
$\varepsilon_x$ can be thought of as random noise introduced by
measurements of the original field $z_x$. The field $z_x$ is not
observable directly and we can possibly obtain only the values of
$y_x$.

The learning process can be thought of as estimating the values of the
field $y_x$ given the values of the field at sample points. One can
show that the conditional distribution of $z_x$ given a sample
$S=((x_1,y_1),(x_2,y_2),\ldots,(x_T,y_T))$ is Gaussian with the mean
of $\gamma_x^\RR=Y'(K+\sigma^2I)^{-1}k(x)$ and the variance
$d_x=\Kcal(x,x)-k'(x)(K+\sigma^2I)^{-1}k(x)$.  The conditional
distribution of $y_x$ is Gaussian with the same mean and the variance
$\sigma^2+\Kcal(x,x)-k'(x)(K+\sigma^2I)^{-1}k(x)$ (see
\cite{Rasmussen2006}, Section 2.2, p.~17).

If we let $a=\sigma^2$, we see that $\gamma_t^\RR$ and $a+d_t$ are,
respectively, the mean and the variance of the conditional
distributions for $y_{x_t}$ given the sample $S_t$.

\begin{remark}
\label{rem_diff}
Note that in the statement of the theorem there is no assumption that
the signals $x_t$ are pairwise different. Some of them may
coincide. In the probabilistic picture all $x$s must be different though, or
the corresponding probabilities make no sense.
This obstacle may be overcome in the following way. Let us replace the
domain $X$ by $X'=X\times\N$, where $\N$ is the set of positive
integers $\{1,2,\ldots\}$, and replace $x_t$ by $x'_t=(x_t,t)\in
X'$. For $X'$ there is a Gaussian field with the covariance function
$\Kcal'((x_1,t_1),(x_2,t_2))=\Kcal(x_1,x_2)$. The argument concerning
the probabilistic meaning of ridge regression stays for $\Kcal'$ on
$X'$.
We can thus assume that all $x_t$ are different.
\end{remark}

The proof of the identity is based on the Gaussian field
interpretation. Let us calculate the density of the joint distribution
of the variables $(y_{x_1},y_{x_2},\ldots,y_{x_T})$ at the point
$(y_1,y_2,\ldots,y_T)$. We will do this in three different ways: by
decomposing the density into a chain of conditional densities,
marginalisation, and, finally, direct calculation. Each method will
give us a different expression corresponding to a term in the
identity. Since all the three terms express the same density, they
must be equal.

\subsection{Conditional Probabilities}

We have
\begin{multline*}
p_{y_{x_1},y_{x_2},\ldots,y_{x_T}}(y_1,y_2,\ldots,y_T)=\\
p_{y_{x_T}}(y_T\mid y_{x_1}=y_1,y_{x_2}=y_2,\ldots,y_{x_{T-1}}=y_{T-1})\cdot\\
p_{y_{x_1},y_{x_2},\ldots,y_{x_{T-1}}}(y_1,y_2,\ldots,y_{T-1})\enspace.
\end{multline*}
Expanding this further yields
\begin{multline*}
p_{y_{x_1},y_{x_2},\ldots,y_{x_T}}(y_1,y_2,\ldots,y_T)=\\
p_{y_{x_T}}(y_T\mid y_{x_1}=y_1,y_{x_2}=y_2,\ldots,y_{x_{T-1}}=y_{T-1})\cdot\\
p_{y_{x_{T-1}}}(y_{T-1}\mid
y_{x_1}=y_1,y_{x_2}=y_2,\ldots,y_{x_{T-2}}=y_{T-2})\cdots
p_{y_{x_1}}(y_1)\enspace.
\end{multline*}
As we have seen before, the distribution for $y_{x_t}$ given that
$y_{x_1}=y_1,y_{x_2}=y_2,\ldots,y_{x_{t-1}}=y_{t-1}$ is
Gaussian with the mean of $\gamma_t^\RR$ and the variance of $d_t+\sigma^2$. Thus
\begin{multline*}
p_{y_{x_t}}(y_t\mid
y_{x_1}=y_1,y_{x_2}=y_2,\ldots,y_{x_{t-1}}=y_{t-1})=\\
\frac{1}{\sqrt{2\pi}}\frac{1}{\sqrt{d_t+\sigma^2}}e^{-\frac
  12\frac{(y_t-\gamma_t^\RR)^2}{d_t+\sigma^2}}
\end{multline*}
and
\begin{multline*}
p_{y_{x_1},y_{x_2},\ldots,y_{x_T}}(y_1,y_2,\ldots,y_T)=\\
\frac{1}{(2\pi)^{T/2}\sqrt{(d_1+\sigma^2)(d_2+\sigma^2)\ldots
    (d_T+\sigma^2)}}e^{-\frac 12
  \sum_{t=1}^T\frac{(\gamma_t^\RR-y_t)^2}{d_t+\sigma^2}}\enspace.
\end{multline*}

\subsection{Dealing with a Singular Kernel Matrix}

The expression for the second case looks particularly simple for
non-singular $K$. Let us show that this is sufficient to prove the
identity.

All the terms in the identity are in fact continuous functions of
${T(T+1)/2}$ values of $\Kcal$ at the pairs of points $x_i,x_j$,
$i,j=1,2,\ldots,T$. Indeed, the values of $\gamma_t^\RR$ in the
left-hand side expression are ridge regression predictions given by
respective analytic formula. Note that the coefficients of the inverse
matrix are continuous functions of the original matrix.

The optimal function minimising the second expression is in fact
$f_\RR(x)=\sum_{t=1}^Tc_t\Kcal(x_t,x)$, where the coefficients $c_t$
are continuous functions of the values of $\Kcal$. The reproducing
property implies that
\begin{equation*}
\|f_\RR\|^2=\sum_{i,j=1}^Tc_ic_j\langle\Kcal(x_i,\cdot),
\Kcal(x_j,\cdot)\rangle_\Fcal
=\sum_{i,j=1}^Tc_ic_j\Kcal(x_i,x_j)\enspace.
\end{equation*}

We can thus conclude that all the expressions are continuous in the
values of $\Kcal$. Consider the kernel
$\Kcal_\alpha(x_1,x_2)=\Kcal(x_1,x_2)+\alpha\delta(x_1,x_2)$, where
$\delta$ is as in (\ref{eq_delta}) and $\alpha>0$.  Clearly, $\delta$
is a kernel and thus $\Kcal_\alpha$ is a kernel. If all $x_t$ are
different (recall Remark~\ref{rem_diff}), kernel matrix for
$\Kcal_\alpha$ equals $K+\alpha I$ and therefore it is non-singular.

However the values of $\Kcal_\alpha$ tend to the corresponding values
of $\Kcal$ as $\alpha\to 0$.

\subsection{Marginalisation}

The method of marginalisation consists of introducing extra variables
to obtain the joint density in some manageable form and then
integrating over the extra variables to get rid of them. The variables
we are going to consider are $z_{x_1},z_{x_2},\ldots,z_{x_T}$.

Given the values of $z_{x_1},z_{x_2},\ldots,z_{x_T}$, the density of
$y_{x_1},y_{x_2},\ldots,y_{x_T}$ is easy to calculate. Indeed, given
$z$s all $y$s are independent and have the means of corresponding $z$s
and variances of $\sigma^2$, i.e.,
\begin{multline*}
p_{y_{x_1},y_{x_2},\ldots,y_{x_T}}(y_1,y_2,\ldots,y_T\mid
z_{x_1}=z_1,z_{x_2}=z_2,\ldots,z_{x_{T-1}}=z_{T-1})=\\
\frac 1{\sqrt{2\pi}}\frac 1\sigma e^{-\frac
    12\frac{(y_1-z_1)^2}{\sigma^2}}
\frac 1{\sqrt{2\pi}}\frac 1\sigma e^{-\frac
    12\frac{(y_2-z_2)^2}{\sigma^2}}\cdots
\frac 1{\sqrt{2\pi}}\frac 1\sigma e^{-\frac
    12\frac{(y_T-z_T)^2}{\sigma^2}}=\\
\frac 1{(2\pi)^{T/2}\sigma^T}e^{-\frac
  {1}{2\sigma^2}\sum_{t=1}^T(y_t-z_t)^2}
\end{multline*}
Since $z_{x_1},z_{x_2},\ldots,z_{x_T}$ have a joint Gaussian
distribution with the mean of 0 and covariance matrix $K_T$, their
density is given by
\begin{equation*}
  p_{z_{x_1},z_{x_2},\ldots,z_{x_T}}(z_1,z_2,\ldots,z_T)=\frac
  1{(2\pi)^{T/2}\sqrt{\det K_{T}}}e^{-\frac 12 Z'K_{T}^{-1}Z}\enspace,
\end{equation*}
where $Z=(z_1,z_2,\ldots,z_T)'$, provided $K_{T}$ is non-singular.

Using
\begin{multline*}
p_{y_{x_1},y_{x_2},\ldots,y_{x_T},z_{x_1},z_{x_2},\ldots,z_{x_T}}
(y_1,y_2,\ldots,y_T,z_1,z_2,\ldots,z_T)=\\
p_{y_{x_1},y_{x_2},\ldots,y_{x_T}}(y_1,y_2,\ldots,y_T\mid
z_{x_1}=z_1,z_{x_2}=z_2,\ldots,z_{x_{T-1}}=z_{T-1})\cdot\\
p_{z_{x_1},z_{x_2},\ldots,z_{x_T}}(z_1,z_2,\ldots,z_T)
\end{multline*}
and
\begin{multline*}
p_{y_{x_1},y_{x_2},\ldots,y_{x_T}}(y_1,y_2,\ldots,y_T)=\\
\int_{\R^T}p_{y_{x_1},y_{x_2},\ldots,y_{x_T},z_{x_1},z_{x_2},\ldots,z_{x_T}}
(y_1,y_2,\ldots,y_T,z_1,z_2,\ldots,z_T)dZ
\end{multline*}
we get
\begin{multline*}
p_{y_{x_1},y_{x_2},\ldots,y_{x_T}}(y_1,y_2,\ldots,y_T)=\\
\frac 1{(2\pi)^{T/2}\sigma^T}\frac1{(2\pi)^{T/2}\sqrt{\det K_{T}}}
\int_{\R^T}e^{-\frac{1}{2\sigma^2}\sum_{t=1}^T(y_t-z_t)^2-\frac 12 Z'K_{T}^{-1}Z}dZ\enspace.
\end{multline*}
To evaluate the integral we need the following proposition (see
\cite{beckenbach/bellman:1961}, Theorem~3 of Chapter~2) .
\begin{proposition}
\label{prop_int}
Let $Q(\theta)$ be a quadratic form of $\theta\in\R^n$ with the
positive-definite quadratic part, i.e.,
$Q(\theta)=\theta'A\theta+\theta'b+c$, where the matrix $A$ is
symmetric positive-definite. Then
\begin{equation*}
\int_{\R^n}e^{-Q(\theta)}d\theta=e^{-Q(\theta_0)}\frac{\pi^{n/2}}
{\sqrt{\det    A}}\enspace,
\end{equation*}
where $\theta_0=\arg\min_{\R^n}Q$.
\end{proposition}
The quadratic part of the form in our integral has the matrix $\frac
12K_{T}^{-1}+\frac 1{2\sigma^2}I$ and therefore
\begin{multline*}
p_{y_{x_1},y_{x_2},\ldots,y_{x_T}}(y_1,y_2,\ldots,y_T)=\\
\frac 1{(2\pi)^{T}\sigma^T\sqrt{\det K_{T}}}\frac{\pi^{T/2}}{\sqrt{\det
  (\frac 12K_{T}^{-1}+\frac 1{2\sigma^2}I)}}\cdot \\
e^{-\min_Z\left(\frac{1}{2\sigma^2}\sum_{t=1}^T(y_t-z_t)^2+\frac 12
  Z'K_{T}^{-1}Z\right)}
\end{multline*}
We have
\begin{align*}
  \sqrt{\det K_{T}}\sqrt{\det\left(\frac 12K_{T}^{-1}+\frac 1{2\sigma^2}I\right)}&=
\sqrt{\det\left(\frac 12I+\frac 1{2\sigma^2}K_{T}\right)}\\ &=
\frac 1{2^{T/2}\sigma^T}\sqrt{\det(K_{T}+\sigma^2I)}\enspace.
\end{align*}
Let us deal with the minimum. We will link it to
\begin{equation*}
M=\min_{f\in\Fcal}\left(
\sum_{t=1}^T(f(x_t)-y_t)^2+\sigma^2\|f\|^2_\Fcal\right)\enspace.
\end{equation*}
The representer theorem (see Proposition~\ref{prop_representer})
implies that the minimum from the definition of $M$ is achieved on 
a function of the form $f(x)=\sum_{t=1}^Tc_t\Kcal(x_t,\cdot)$. For the
column vector $Z(x)=(f(x_1),f(x_2),\ldots,f(x_T))'$ we have
$Z(x)=K_{T}C$, where $C=(c_1,c_2,\ldots,c_T)'$. Since $K_{T}$ is
supposed to be non-singular, there is a one-to-one correspondence
between $C$ and $Z(x)$; we have $C=K^{-1}_{T}Z(x)$ and
$\|f\|^2_\Fcal=C'K_{T}C=Z'(x)K_{T}^{-1}Z(x)$. We can minimise by $Z$
instead of $C$ and therefore
\begin{equation*}
\min_Z\left(\frac{1}{2\sigma^2}\sum_{t=1}^T(y_t-z_t)^2+\frac 12
Z'K_{T}^{-1}Z\right)=\frac{1}{2\sigma^2}M\enspace.
\end{equation*}
For the density we get the expression
\begin{equation*}
  p_{y_{x_1},y_{x_2},\ldots,y_{x_T}}(y_1,y_2,\ldots,y_T)=
  \frac 1{(2\pi)^{T/2}\sqrt{\det(K_{T}+\sigma^2I)}}e^{-\frac
    1{2\sigma^2}M}\enspace.
\end{equation*}

\subsection{Direct Calculation}

One can easily calculate the covariances of $y$s:
\begin{align*}
\cov(y_{x_1},y_{x_2}) &=
E(z_{x_1}+\varepsilon_{x_1})(z_{x_2}+\varepsilon_{x_2})\\
&=Ez_{x_1}z_{x_2}+E\varepsilon_{x_1}\varepsilon_{x_2}\\
&=\Kcal(x_1,x_2)+\sigma^2\delta(x_1,x_2)\enspace.
\end{align*}
Therefore, one can write down the expression
\begin{multline*}
p_{y_{x_1},y_{x_2},\ldots,y_{x_T}}(y_1,y_2,\ldots,y_T)=\\
\frac 1{(2\pi)^{T/2}\sqrt{\det(K_{T}+\sigma^2I)}}e^{-\frac 12 Y'_{T}(K_{T}+\sigma^2I)^{-1}Y_{T}}\enspace.
\end{multline*}

\subsection{Equating the Terms}

It remains to take the logarithms of the densities calculated in
different ways. We need the following matrix lemma.
\begin{lemma}
\label{lem_proddt}
\begin{equation*}
(d_1+\sigma^2)(d_2+\sigma^2)\ldots (d_T+\sigma^2)=\det(K_{T}+\sigma^2I)
\end{equation*}
\end{lemma}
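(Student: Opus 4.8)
The plan is to prove the determinant identity by recognizing the left-hand side as a telescoping product arising from the recursive structure of the $d_t$. Recall that $d_t+\sigma^2$ is the conditional variance of $y_{x_t}$ given the previous observations, and these conditional variances appear naturally when one factors a joint Gaussian density into a chain of one-dimensional conditionals. The cleanest route is to exploit the fact that the conditional variance of $y_{x_t}$ equals the Schur complement of the leading $(t-1)\times(t-1)$ block of the matrix $K_t+\sigma^2 I$ with respect to its last diagonal entry.

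First I would write $A_t=K_t+\sigma^2 I$ and observe that $A_t$ is obtained from $A_{t-1}$ by bordering it with the row/column $k_{t-1}(x_t)$ and the corner entry $\Kcal(x_t,x_t)+\sigma^2$. The standard block-determinant (Schur complement) formula then gives
\begin{equation*}
\det A_t = \det A_{t-1}\cdot\left(\Kcal(x_t,x_t)+\sigma^2-k'_{t-1}(x_t)A_{t-1}^{-1}k_{t-1}(x_t)\right).
\end{equation*}
The parenthesised factor is exactly $d_t+\sigma^2$ by the definition of $d_t$ given in Theorem~\ref{th_main}. This is the key computational step, and the only genuine obstacle is justifying the block-determinant formula and checking that $A_{t-1}$ is invertible so that the Schur complement is well-defined; but $A_{t-1}=K_{t-1}+\sigma^2 I$ is positive-definite since $K_{t-1}$ is positive-semidefinite and $\sigma^2>0$, so invertibility is immediate.

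Having established the one-step recursion $\det A_t=(d_t+\sigma^2)\det A_{t-1}$, I would finish by induction on $t$. The base case is $A_0=\det(\text{empty matrix})=1$ (the empty product convention), and unrolling the recursion from $t=1$ to $T$ telescopes to
\begin{equation*}
\det(K_T+\sigma^2 I)=\det A_T=\prod_{t=1}^T(d_t+\sigma^2),
\end{equation*}
which is precisely the claimed identity.

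I should note that this matches the probabilistic picture already developed: since $y_{x_1},\ldots,y_{x_T}$ are jointly Gaussian with covariance matrix $K_T+\sigma^2 I$, the determinant of the covariance matrix factors as the product of successive conditional variances $d_t+\sigma^2$, which is a general fact about Gaussian densities and gives an alternative, purely probabilistic derivation. Either argument works; the Schur-complement computation is the more elementary and self-contained of the two, so I would present that as the main proof and perhaps remark on the probabilistic interpretation as a sanity check.
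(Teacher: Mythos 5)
Your proposal is correct and follows essentially the same route as the paper: the ``Schur complement'' block-determinant formula you invoke is exactly the Frobenius identity $\det\begin{pmatrix}A & u\\ v' & d\end{pmatrix}=(d-v'A^{-1}u)\det A$ used in the paper's proof, applied recursively to peel off the last row and column of $K_T+\sigma^2 I$. Your additional remarks on the invertibility of $K_{t-1}+\sigma^2 I$ and the probabilistic reading of the factors are consistent with the paper and do not change the argument.
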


\begin{proof}
The lemma follows from Frobenius's identity (see, e.g.,
\cite{matan_henderson}):
\begin{equation*}
\det
\begin{pmatrix}
A & u \\
v'& d
\end{pmatrix}=
(d-v'A^{-1}u)\det A \enspace,
\end{equation*}
where $d$ is a scalar and the submatrix $A$ is non-singular.

We have
\begin{align*}
\det(K_{T}+\sigma^2I) &= (\Kcal(x_T,x_T)+\sigma^2-k'_{T-1}(x_T)(
K_{T-1}+\sigma^2I)^{-1}k_{T-1}(x_T))\cdot\\
&{}\;\;\;\; \det(K_{T-1}+\sigma^2I)
\\
&= (d_T+\sigma^2)\det(K_{T-1}+\sigma^2I)\\
&=\ldots\\
&=(d_T+\sigma^2)(d_{T-1}+\sigma^2)\ldots (d_2+\sigma^2)(d_1+\sigma^2)\enspace.
\end{align*}
\end{proof}

We get
\begin{equation*}
\sum_{t=1}^T\frac{(\gamma_t^\RR-y_t)^2}{d_t+\sigma^2}=\frac 1{2\sigma^2}M=
Y'_T(K_{T}+\sigma^2I)^{-1}Y_T\enspace.
\end{equation*}
The theorem follows.

\section{Bayesian Merging Algorithm and an Alternative Proof of the Identity}
\label{sect_AA}

In this section we reproduce an alternative way (after
\cite{abs-0910-4683}) of obtaining the identity.

An advantage of this approach is that we do not need to consider
random fields. The use of probability is minimal; all probabilities in
this approach are no more than weights or predictions. This provides
an additional intuition to the proof.

\subsection{Prediction with Expert Advice}

Consider the standard prediction with expert advice framework. Let
outcomes $y_1,y_2,\ldots$ from an {\em outcome set} $\Omega$ occur
successively in discrete time. A {\em learner} tries to predict each
outcome and outputs a prediction $\gamma_t$ from a {\em prediction
  set} $\Gamma$ each time before it sees the outcome $y_t$. There is
also a pool $\Theta$ of {\em experts}; experts try to predict the
outcomes from the same sequence and their predictions
$\gamma_t^\theta$ are made available to the learner. The quality of
predictions is assessed by means of a {\em loss function} $\lambda:
\Gamma\times\Omega\to [0,+\infty]$.

The framework can be summarised in the following protocol:

\begin{protocol}
{\tt
\begin{tabbing}
  \quad\= for \=\quad\kill\\
  for $t=1,2,\ldots$\\
  \> experts $\theta\in\Theta$ announce predictions $\gamma_t^\theta
  \in \Gamma$\\
  \> learner outputs $\gamma_t \in \Gamma$\\
  \> reality announces $y_t \in \Omega$\\
  \> each expert $\theta\in\Theta$ suffers loss
  $\lambda(\gamma_t^\theta,y_t)$\\
  \> learner suffers loss  $\lambda(\gamma_t,y_t)$\\
  endfor
\end{tabbing}
}
\end{protocol}

The goal of the learner in this framework is to suffer the cumulative
loss $\Loss_T=\sum_{t=1}^T\lambda(\gamma_t,y_t)$ not much larger than
the cumulative loss of each expert
$\Loss_T(\theta)=\sum_{t=1}^T\lambda(\gamma_t^\theta,y_t)$.

In this paper we consider the game with the outcome set $\Omega=\R$
and the prediction set $\Gamma$ of all continuous density functions
on $\R$, i.e., continuous functions $\xi:\R\to [0,+\infty)$ such
  that $\int_{-\infty}^{+\infty}\xi(y)dy=1$. The loss function is
  negative logarithmic likelihood, i.e.,
  $\lambda(\xi,y)=-\ln\xi(y)$.

\subsection{Bayesian Merging Algorithm}

Consider the following merging algorithm for the learner. The
algorithm takes an initial distribution $P_0$ on the pool of experts
$\Theta$ as a parameter and maintains weights $P_t$ for experts
$\theta$.

\begin{protocol}
\label{protocol_bayes}
{\tt
\begin{tabbing}
  \quad\= for \=\quad\kill\\
  let $P_0^*=P_0$\\
  for $t=1,2,\ldots$\\
  \> read experts' predictions $\xi_t^\theta \in \Gamma$, $\theta
  \in\Theta$\\
  \> predict $\xi_t = \int_\Theta \xi_t^\theta P_{t-1}^*(d\theta)$\\
  \> read $y_t$\\
  \> update the weights $P_t(d\theta) = \xi_t^\theta(y_t) P_{t-1}(d\theta)$\\
  \> normalise the weights $P_t^*(d\theta) = P_t(d\theta)/\int_\Theta
  P_t(d\theta)$\\ 
  endfor
\end{tabbing}
}
\end{protocol}

If we consider an expert $\theta$ as a probabilistic hypothesis, this
algorithm becomes the Bayesian strategy for merging hypotheses. The
weights $P_t^*$ relate to $P_{t-1}^*$ as posterior probabilities to
prior probabilities assigned to the hypotheses. We will refer to the
algorithm as the Bayesian Algorithm (BA).

The algorithm can also be considered as a special case of the
Aggregating Algorithm (\cite{vovk_aggr, vovk_advice}, see also
\cite{vovk_cols}) going back to \cite{raznoe_santis}. It is easy to
check that the Aggregating Algorithm for these outcome set,
prediction set, and the loss function and the learning rate $\eta=1$
reduces to Protocol~\ref{protocol_bayes}. However we will not be using
the results proved for the Aggregating Algorithm in this paper.

After $t$ steps the weights become
\begin{equation}\label{eq:weights}
  P_t(d\theta) = e^{-\Loss_t(\theta)} P_0(d\theta)\enspace.
\end{equation}
The following lemma is a special case of Lemma~1 in
Vovk~\cite{vovk_cols}.  It shows that the cumulative loss of the BA is
an average of the experts' cumulative losses in a generalised sense
(as in, e.g., Chapter 3 of \cite{matan_hardy_inequalities}).
\begin{lemma}\label{lem:lossAPA}
  For any prior $P_0$ and any $t=1,2,\ldots$, the cumulative loss of
  the BA can be expressed as
  \begin{equation*}
    \Loss_t
    =
    -\ln \int_{\Theta} e^{- \Loss_t(\theta)} P_0(d\theta).
  \end{equation*}
\end{lemma}
\begin{proof}
The proof is by induction on $t$. For $t=0$ the equality is obvious
and for $t>0$ we have
  \begin{multline*}
    \Loss_t = \Loss_{t-1} - \ln\xi_t(y_t) = \\ -\ln \int_{\Theta}
    e^{- \Loss_{t-1}(\theta)} P_0(d\theta) - \ln \int_{\Theta}
    \xi^{\theta}_t(y_t) \frac{e^{- \Loss_{t-1}(\theta)}}{\int_{\Theta}
      e^{- \Loss_{t-1}(\theta)} P_0(d\theta)} P_0(d\theta)\\ = 
 -\ln\int_{\Theta} e^{- (-\ln\xi_t^\theta(y_t)+\Loss_{t-1}(\theta))} P_0(d\theta) =
-\ln \int_{\Theta} e^{- \Loss_t(\theta)} P_0(d\theta)
  \end{multline*}
  (the second equality follows from the inductive assumption, the
  definition of $\xi_t$, and (\ref{eq:weights})).
\end{proof}

\subsection{Linear Ridge Regression as a Mixture}
\label{subsect_linearRR}

The above protocols can incorporate signals as in
Protocol~\ref{prot_online}. Indeed let the reality announce a signal
$x_t$ on each step $t$; the signal can be used by both the experts and
the learner.

Suppose that signals come from $\R^n$. Take a pool of {\em Gaussian
  experts} $\Theta=\R^n$. Fix some $\sigma>0$ and let expert $\theta$
output the density of Gaussian distribution
$\calN(\theta'x_t,\sigma^2)$, i.e.,
\begin{equation}
\label{eq_gaussian_experts}
\xi_t^\theta(y)=\frac 1{\sqrt{2\pi\sigma^2}}e^{-\frac{(\theta'x_t-y)^2}{2\sigma^2}}
\enspace, 
\end{equation}
on step $t$.

Let us assume the multivariate Gaussian distribution
$\calN(0,I)$ with the density
\begin{equation}
\label{eq_normal_prior}
p_0(\theta)=\frac 1{(2\pi)^{n/2}}e^{-\|\theta\|^2/2}
\end{equation}
as the initial distribution over the pool of experts. We will show
that the learner using the Bayesian merging algorithm with this
initial distribution will be outputting a Gaussian density with the
mean of the ridge regression prediction. Note that there is no
assumption on the mechanism generating outcomes $y_t$.

Let $Y_{t}$ be the vector of outcomes $y_1,y_2,\ldots,y_{t}$. Let
$X_{t}$ be the design matrix made up of column vectors
$x_1,x_2,\ldots,x_{t}$ and $A_t=X_tX'_t+\sigma^2I$, $t=1,2,\ldots$.

\begin{lemma}
\label{lem_RR_mix}
The learner using the Bayesian merging algorithm with the initial
distribution (\ref{eq_normal_prior}) on the pool of experts $\R^n$
predicting according to (\ref{eq_gaussian_experts}) will be outputting
on step $T=1,2,\ldots$ the density
\begin{equation*}
\xi_T(y)=\frac 1{\sqrt{2\pi\sigma_T^2}}e^{-\frac{(\gamma_T^\RR-y)^2}{2\sigma_T^2}}\enspace,
\end{equation*}
where
\begin{align*}
\gamma_T^\RR &= Y'_{T-1}X'_{T-1}A_{T-1}^{-1}x_T\\
\sigma^2_T&=\sigma^2x'_TA^{-1}_{T-1}x_T+\sigma^2\enspace.
\end{align*}
\end{lemma}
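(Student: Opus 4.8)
The plan is to compute the posterior distribution $P_{T-1}^*$ over the experts explicitly, observe that it is Gaussian, and then carry out the mixing integral $\xi_T(y)=\int_{\R^n}\xi_T^\theta(y)\,P_{T-1}^*(d\theta)$ to obtain a Gaussian predictive density. First I would use formula~(\ref{eq:weights}), which gives $P_{T-1}(d\theta)=e^{-\Loss_{T-1}(\theta)}p_0(\theta)\,d\theta$. Since the loss of expert $\theta$ on step $t$ is $-\ln\xi_t^\theta(y_t)=\frac{(\theta'x_t-y_t)^2}{2\sigma^2}+\tfrac12\ln(2\pi\sigma^2)$, the additive constants $\tfrac12\ln(2\pi\sigma^2)$ are independent of $\theta$ and drop out upon normalisation. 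Combining with the prior density~(\ref{eq_normal_prior}), the normalised posterior satisfies
\begin{equation*}
P_{T-1}^*(d\theta)\propto\exp\left(-\frac1{2\sigma^2}\sum_{t=1}^{T-1}(\theta'x_t-y_t)^2-\frac12\|\theta\|^2\right)d\theta\enspace.
\end{equation*}

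Next I would collect the exponent into a quadratic form in $\theta$. Using $\sum_{t=1}^{T-1}x_tx_t'=X_{T-1}X_{T-1}'$ and $\sum_{t=1}^{T-1}y_tx_t=X_{T-1}Y_{T-1}$, the quadratic part of the exponent has matrix $\frac1{2\sigma^2}(X_{T-1}X_{T-1}'+\sigma^2 I)=\frac1{2\sigma^2}A_{T-1}$ and the linear part is $\frac1{\sigma^2}\theta'X_{T-1}Y_{T-1}$. Completing the square (equivalently, matching coefficients against a generic Gaussian proportional to $e^{-\frac12(\theta-\mu)'\Sigma^{-1}(\theta-\mu)}$) identifies the posterior as the Gaussian $\calN(\mu,\Sigma)$ with $\Sigma=\sigma^2A_{T-1}^{-1}$ and $\mu=A_{T-1}^{-1}X_{T-1}Y_{T-1}$.

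It then remains to evaluate the mixture $\xi_T(y)=\int_{\R^n}\xi_T^\theta(y)\,P_{T-1}^*(d\theta)$. This is exactly the marginal density of $y=\theta'x_T+\varepsilon$, where $\theta\sim\calN(\mu,\Sigma)$ is drawn from the posterior and $\varepsilon\sim\calN(0,\sigma^2)$ is the independent Gaussian noise implicit in~(\ref{eq_gaussian_experts}). Since $\theta'x_T$ is a linear functional of the Gaussian $\theta$, the sum is Gaussian with mean $\mu'x_T$ and variance $x_T'\Sigma x_T+\sigma^2$; alternatively one performs the Gaussian integral directly by Proposition~\ref{prop_int}. Substituting $\mu=A_{T-1}^{-1}X_{T-1}Y_{T-1}$ and $\Sigma=\sigma^2A_{T-1}^{-1}$ and using the symmetry of $A_{T-1}^{-1}$ gives mean $Y_{T-1}'X_{T-1}'A_{T-1}^{-1}x_T=\gamma_T^\RR$ and variance $\sigma^2x_T'A_{T-1}^{-1}x_T+\sigma^2=\sigma_T^2$, as claimed.

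I expect the mixing integral to be the main obstacle, mostly as a matter of bookkeeping: one must either invoke the linear--Gaussian marginalisation fact cleanly or set up the $(n{+}1)$-dimensional Gaussian integral and apply Proposition~\ref{prop_int} without sign or normalisation slips. The completing-the-square step is routine but also needs care with the $\sigma^2$ factors. Finally, to see that this $\gamma_T^\RR$ coincides with the usual ridge regression formula $Y_{T-1}'(K_{T-1}+\sigma^2 I)^{-1}k_{T-1}(x_T)$ with $K_{T-1}=X_{T-1}'X_{T-1}$ and $k_{T-1}(x_T)=X_{T-1}'x_T$, one applies the push-through identity $(X'X+\sigma^2 I)^{-1}X'=X'(XX'+\sigma^2 I)^{-1}$.
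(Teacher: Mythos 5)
Your proposal is correct and follows essentially the same route as the paper: identify the normalised posterior $P_{T-1}^*$ as the Gaussian $\calN(A_{T-1}^{-1}X_{T-1}Y_{T-1},\sigma^2A_{T-1}^{-1})$ (the paper reads off the mean as the mode of the log-quadratic density rather than completing the square, a cosmetic difference) and then evaluate the mixture as the marginal of $y=\theta'x_T+\varepsilon$ via the standard linear--Gaussian marginalisation fact. No gaps.
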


We have $\gamma_T^\RR=\left(\theta^{\RR}_{T}\right)' x_T$, where
$\theta_T^\RR =A_{T-1}^{-1}X_{T-1}Y_{T-1}$. At $\theta_T^\RR $ the
minimum $\min_{\theta\in\R^n}\left(\sum_{t=1}^{T-1}(\theta'x_t-y_t)^2+
\sigma^2\|\theta\|^2\right)$ is achieved. This can be checked directly
by differentiation or by reducing to Proposition~\ref{prop_KRR} (see
Subsection~\ref{subsect_kernelisation} for a discussion of linear
ridge regression as a special case of kernel ridge regression).
We will refer to the function $\left(\theta^{\RR}_{T}\right)' x$ as
the {\em linear ridge regression} with the parameter $\sigma^2$. We
are considering the on-line mode, but linear ridge regression can also
be applied in the batch mode just like the general kernel ridge
regression.

Let us prove the lemma.

\begin{proof}
To evaluate the integral
\begin{equation}
\xi_T(v) = \int_{\R^n}\xi_T^\theta(v)P_{T-1}^*(d\theta)
\end{equation}
we will use a probabilistic interpretation. 

Let $\theta$ be a random value distributed according to $P_{T-1}^*$,
i.e., having the density
$$
p_\theta(u)\sim e^{-\frac
  1{2\sigma^2}\sum_{t=1}^{T-1}(u'x_t-y_t)^2-\frac 12\|u\|^2}\enspace.
$$ Clearly, $\theta$ has a multivariate Gaussian distribution. The mean
of a Gaussian distribution coincides with its mode and thus the mean
of $\theta$ equals
$$\theta_T^\RR =\arg\min_{u\in\R^n}\left(\sum_{t=1}^{T-1}(u'x_t-y_t)^2+\sigma^2\|u\|^2\right)
=A_{T-1}^{-1}X_{T-1}Y_{T-1}\enspace.$$ The covariance matrix
$$ 
\Sigma=\left(\frac
1{\sigma^2}\sum_{t=1}^{T-1}x_tx'_t+I\right)^{-1}
=\sigma^2A_{T-1}^{-1}
$$ 
can be obtained by singling out the quadratic part of the quadratic
form in $u$.

Let $y$ be a random value given by $y=\theta'x_T+\varepsilon$, where
$\varepsilon$ is independent of $\theta$ and has a Gaussian
distribution with the mean of 0 and variance of $\sigma^2$. Clearly,
given that $\theta=u$, the distribution of $y$ is
$\calN(u'x_T,\sigma^2)$. The marginal density of $y$ is just
$\xi_T(v)$ we need to evaluate.

We will use the following statement from \cite{Bishop2006},
Section~2.3.3. Let $\eta$ have the (multivariate) Gaussian distribution
$\calN(\mu,\Lambda^{-1})$ and $\zeta$ have the (multivariate) Gaussian
distribution $\calN(A\eta+b,L^{-1})$, where $A$ is a fixed matrix and
$b$ is a fixed vector. Then the marginal distribution of $\zeta$ is
$\calN(A\mu+b,L^{-1}+A\Lambda^{-1}A')$.

We get that the mean of $y$ is $x'_T\theta_T^\RR $ and the variance is
$\sigma^2+x'_T\sigma^2A_{T-1}^{-1}x_T$.
\end{proof}

The lemma is essentially equivalent to the following statement from
Bayesian statistics. Let $y_t=x'_t\theta+\varepsilon_t$, where
$\varepsilon_t$ are independent Gaussian values with the means of 0
and variances $\sigma^2$, and $x_t$ are not stochastic.  Let the prior
distribution for $\theta$ be $\calN(0,I)$. Then the distribution for
$y_T$ given the observations
$x_1,y_1,x_2,y_2,\ldots,x_{T-1},y_{T-1},x_T$ is
$\calN(\gamma_T^\RR,\sigma_T)$; see, e.g., \cite{Bishop2006},
Section~3.3.2 or \cite{regression_hoerl_kennard}.

\subsection{The Identity in the Linear Case}

The following theorem is a special case of Theorem~\ref{th_main}

\begin{theorem}
\label{th_linear} Take $a>0$.
For a sample $(x_1,y_1),(x_2,y_2),\ldots,(x_T,y_T)$, where
$x_1,x_2,\ldots,x_T\in\R^n$ and $y_1,y_2,\ldots,y_T\in\R^n$, let
$\gamma_1^\RR,\gamma_2^\RR,\ldots,\gamma_T^\RR$ be the predictions
output by linear ridge regression with the parameter $a$ in the
on-line mode. Then
\begin{equation*}
\sum_{t=1}^T\frac{(\gamma^\RR_t-y_t)^2}{1+x'_tA_{t-1}^{-1}x_t}=\min_{\theta\in\R^n}
\left(\sum_{t=1}^T(\theta'x_t-y_t)^2+a\|\theta\|^2\right)=
aY'_T(X'_TX_T+aI)^{-1}Y_T
\enspace,
\end{equation*}
where $A_t=\sum_{i=1}^{t}x'_ix_i+aI=X'_tX_t$, $X_t$ is the design
matrix consisting of column vectors $x_1,x_2,\ldots,x_t$, and
$Y_t=(y_1,y_2,\ldots,y_t)'$.
\end{theorem}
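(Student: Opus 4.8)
The plan is to obtain Theorem~\ref{th_linear} as a direct specialisation of Theorem~\ref{th_main} to the linear kernel $\Kcal(u,v)=u'v$ on $X=\R^n$, so that all three terms of the general identity collapse onto the three terms stated here. First I would identify the RKHS of this kernel: the functions $\Kcal(x,\cdot)=x'(\cdot)$ are exactly the linear functionals on $\R^n$, their span is the space of all linear functions $f(x)=\theta'x$, and the reproducing property forces $\langle\theta_1'(\cdot),\theta_2'(\cdot)\rangle_\Fcal=\theta_1'\theta_2$, whence $\|f\|_\Fcal^2=\|\theta\|^2$. This immediately turns the middle term $\min_{f\in\Fcal}(\sum_t(f(x_t)-y_t)^2+a\|f\|_\Fcal^2)$ into $\min_{\theta\in\R^n}(\sum_t(\theta'x_t-y_t)^2+a\|\theta\|^2)$.

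Next I would match the two closed-form expressions. For the linear kernel the kernel matrix is $K_T=X_T'X_T$ (with entries $x_i'x_j$), so the right-hand side $aY_T'(K_T+aI)^{-1}Y_T$ becomes $aY_T'(X_T'X_T+aI)^{-1}Y_T$ with no further work. It also remains to check that the on-line predictions coincide: the kernel-RR prediction $\gamma_t^\RR=Y_{t-1}'(K_{t-1}+aI)^{-1}k_{t-1}(x_t)$ equals $Y_{t-1}'X_{t-1}'(X_{t-1}X_{t-1}'+aI)^{-1}x_t=(\theta_t^\RR)'x_t$, which is precisely the linear-RR prediction identified in Lemma~\ref{lem_RR_mix}; this is again a push-through identity applied to $X_{t-1}$.

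The one genuinely computational step, and the main obstacle, is the denominator. I must show $1+d_t/a=1+x_t'A_{t-1}^{-1}x_t$, i.e.\ $d_t=a\,x_t'A_{t-1}^{-1}x_t$, where $A_{t-1}=X_{t-1}X_{t-1}'+aI$ is an $n\times n$ matrix while $K_{t-1}+aI=X_{t-1}'X_{t-1}+aI$ is $(t-1)\times(t-1)$. Writing $B=X_{t-1}X_{t-1}'$ and using the push-through identity $X_{t-1}(X_{t-1}'X_{t-1}+aI)^{-1}X_{t-1}'=(B+aI)^{-1}B$ (verifiable directly from $(B+aI)X_{t-1}=X_{t-1}(X_{t-1}'X_{t-1}+aI)$), the general expression $d_t=x_t'x_t-x_t'X_{t-1}(K_{t-1}+aI)^{-1}X_{t-1}'x_t$ becomes $x_t'\bigl(I-(B+aI)^{-1}B\bigr)x_t=a\,x_t'(B+aI)^{-1}x_t$, as required, since $I-(B+aI)^{-1}B=a(B+aI)^{-1}$. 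Having verified all three terms, the identity of Theorem~\ref{th_linear} follows from that of Theorem~\ref{th_main}.
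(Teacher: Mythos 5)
Your proof is correct, but it is not the route the paper takes for this theorem. The paper proves Theorem~\ref{th_linear} from scratch in Section~\ref{sect_AA} via the Bayesian merging algorithm: it introduces Gaussian experts, uses Lemma~\ref{lem:lossAPA} to write the learner's cumulative loss as $-\ln\int_{\Theta}e^{-\Loss_T(\theta)}P_0(d\theta)$, uses Lemma~\ref{lem_RR_mix} to identify the learner's step-$t$ prediction as a Gaussian density centred at $\gamma_t^\RR$ with variance $\sigma^2(1+x_t'A_{t-1}^{-1}x_t)$, and equates the two resulting expressions for $\Loss_T$ (via Proposition~\ref{prop_int} and the matrix determinant lemma) to get the first equality; the second equality is then obtained by substituting $\theta_{T+1}^\RR$ directly. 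The reason for this architecture is that Section~\ref{sect_AA} is meant to supply a proof of the identity \emph{independent} of the Gaussian-field argument: the linear case is established first and Theorem~\ref{th_main} is then deduced from it by kernelisation. You go the other way, specialising Theorem~\ref{th_main} (which has its own proof in Section~\ref{sect_proof}) to the linear kernel. That is logically sound, with no circularity, and your term-by-term matching --- the RKHS of the linear kernel with $\|f_\theta\|_\Fcal=\|\theta\|$, the push-through identity for the predictions, and $d_t/a=x_t'A_{t-1}^{-1}x_t$ --- is exactly the computation the paper itself carries out in Subsection~\ref{subsect_kernelisation}, just run in the opposite logical direction. What your route buys is brevity; what it loses is the purpose of the theorem in the paper, namely a second, probability-free derivation of the identity. (A minor point in your favour: you correctly read $A_t$ as the $n\times n$ matrix $X_tX_t'+aI$; the statement's ``$A_t=\sum_{i}x_i'x_i+aI=X_t'X_t$'' is a typo, as the denominator $1+x_t'A_{t-1}^{-1}x_t$ only makes sense for the $n\times n$ version used in Lemma~\ref{lem_RR_mix}.)
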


\begin{proof}
We start by showing that the first two terms are equal and then
proceed to the last term.

Consider the pool of Gaussian experts with the variance $\sigma^2=a$
and the learner following the Bayesian merging algorithm with the
initial distribution $\calN(0,I)$ on the experts. 

It follows from Lemma~\ref{lem_RR_mix}, that the total loss of the
learner over $T$ steps is given by
\begin{align}
\Loss_T &= -\sum_{t=1}^T\ln\frac
1{\sqrt{2\pi\sigma_t^2}}e^{-\frac{(\gamma_t-y_t)^2}{2\sigma_t^2}}\\
\label{eq_loss_def}
&=\sum_{t=1}^T\frac{(\gamma_t-y_t)^2}{2\sigma_t^2}+
\ln\prod_{t=1}^T\sigma_t+\frac T2\ln(2\pi)\enspace,
\end{align}
where $\sigma^2_t=\sigma^2(1+x'_tA^{-1}_{t-1}x_t)$.

Lemma~\ref{lem:lossAPA} implies that
\begin{equation*}
\Loss_T = -\ln\left(\frac 1{(2\pi\sigma^2)^{T/2}(2\pi)^{n/2}}
\int_{\R^n}
e^{-\frac{1}{2\sigma^2}\sum_{t=1}^T(\theta'x_t-y_t)^2-\frac
  12\|\theta\|^2}d\theta\right)\enspace.
\end{equation*}
It follows from Proposition~\ref{prop_int} that the integral evaluates
to
\begin{multline*}
e^{-\min_{\theta\in\R_n}\left(\frac{1}{2\sigma^2}\sum_{t=1}^T(\theta'x_t-y_t)^2+
\frac
12\|\theta\|^2\right)}\frac{\pi^{n/2}}{\sqrt{\det(A_T/(2\sigma^2))}}=\\
e^{-\frac 1{2\sigma^2}\min_{\theta\in\R_n}\left(\sum_{t=1}^T(\theta'x_t-y_t)^2+
\sigma^2\|\theta\|^2\right)}\frac{(2\pi)^{n/2}}{\sqrt{\det(A_T/\sigma^2)}}
\end{multline*}
and thus
\begin{equation}
\label{eq_loss_min}
\begin{split}
\Loss_T =\frac
1{2\sigma^2}\min_{\theta\in\R_n}\left(\sum_{t=1}^T(\theta'x_t-y_t)^2+
\sigma^2\|\theta\|^2\right)+\\
\frac T2\ln(2\pi)+T\ln\sigma+\frac
12\ln\det\frac{A_T}{\sigma^2}\enspace.
\end{split}
\end{equation}

Let us equate the expressions for the loss provided by
(\ref{eq_loss_def}) and (\ref{eq_loss_min}). To prove the identity we
need to show that
$$
\frac 12\ln\prod_{t=1}^T\sigma^2_t=\frac 12 T\ln\sigma^2+\frac
12\ln\det\frac{A_T}{\sigma^2}\enspace.
$$
This equality follows from the lemma.
\begin{lemma}
For any $a>0$ and positive integer $T$ we have
$$
\det\frac{A_T}a=\prod_{t=1}^T(1+x'_tA_{t-1}^{-1}x_t)\enspace,
$$
where $A_t=\sum_{i=1}^{t}x'_ix_i+aI$.
\end{lemma}

\begin{proof}
We will use the matrix determinant lemma
$$\det(A+uv')=(1+v'A^{-1}u)\det A\enspace,$$ which holds for any non-singular
square matrix $A$ and vectors $u$ and $v$ (see, e.g.,
\cite{Harville1997}, Theorem~18.1.1). We get
\begin{align*}
\det\frac{A_T}a&= \frac 1{a^n}\det(A_{T-1}+x_Tx'_T)\\
&= \frac 1{a^n}\det(A_{T-1})(1+x'_TA_{T-1}^{-1}x_T)\\
&= \ldots\\
&= \frac 1{a^n}\det(aI)\prod_{t=1}^T(1+x'_tA_{t-1}x_t)\\
&= \prod_{t=1}^T(1+x'_tA_{t-1}^{-1}x_t)\enspace.
\end{align*}
\end{proof}

\begin{remark}
The lemma is in fact a special case of Lemma~\ref{lem_proddt} with the
linear kernel $\Kcal(u_1,u_2)=u'_1u_2$ and $a=\sigma^2$.  As shown in
Subsection~\ref{subsect_kernelisation},
$d_t/a=x'_tA^{-1}_{t-1}x_t$. The Sylvester identity implies that
$\det(aI+K_T)=\det(aI+X'_TX_T)=\det(aI+X_TX'_T)=\det A_T$.
\end{remark}

We have shown that the left-hand side and the middle terms in the
identity are equal. Let us proceed to the equality between the middle
and the right-hand side terms.

The minimum in the middle term is achieved on
$\theta_{T+1}^\RR=A_{T}^{-1}X_TY_T=(X_TX'_T+aI)^{-1}X_TY_T$ as shown
in Subsection~\ref{subsect_linearRR}. Using
Lemma~\ref{lemma_matrix_inv} we can also write
$\theta_{T+1}^\RR=X_T(X'_TX_T+aI)^{-1}Y_T$.  The proof is by direct
substitution of these expressions for $\theta_{T+1}^\RR$. We
have
\begin{multline*}
M=\min_{\theta\in\R^n}\left(\sum_{t=1}^T(\theta'x_t-y_t)^2+a\|\theta\|^2\right)=
\sum_{t=1}^T\left(\left(\theta_{T+1}^\RR\right)'x_t-y_t\right)^2+
a\|\theta_{T+1}^\RR\|^2=\\
\left(\theta_{T+1}^\RR\right)'(X_TX'_T+aI)\theta_{T+1}^\RR
-2\left(\theta_{T+1}^\RR\right)'X_TY_T+Y'_TY_T\enspace.
\end{multline*}
Substituting the first expression for the second appearance of
$\theta_{T+1}^\RR$ and cancelling out $X_TX'_T+aI$ we get
\begin{equation*}
M=(-\theta_{T+1}^\RR X_T+Y'_T)Y_T\enspace.
\end{equation*}
Substituting the second expression for $\theta_{T+1}^\RR$ yields
\begin{equation*}
M=Y'_T(-(X'_TX_T+aI)^{-1}X'_TX_T+I)Y_T
\end{equation*}
It remains to carry $(X'_TX_T+aI)^{-1}$ out of the brackets and cancel
out the remaining terms.
\end{proof}

\subsection{Kernelisation}
\label{subsect_kernelisation}

Let us derive Theorem~\ref{th_main} from Theorem~\ref{th_linear}.

First, let us show that Theorem~\ref{th_linear} is really a special
case of Theorem~\ref{th_main} for the linear kernel
$\Kcal(x_1,x_2)=x'_1x_2$. We will consider the identity term by
term. By Lemma~\ref{lemma_matrix_inv} the prediction output by linear
ridge regression on step $t$ equals
\begin{align*}
\left(\theta_t^\RR\right)'x_t &=
Y'_{t-1}X'_{t-1}(X_{t-1}X'_{t-1}+aI)^{-1}x_t\\
&= Y'_{t-1}(X'_{t-1}X_{t-1}+aI)^{-1}X_{t-1}x_t\\
&= Y'_{t-1}(K_{t-1}+aI)^{-1}k(x_t)\enspace.
\end{align*}
For the linear kernel the expression $d_t/a$ in the denominator of the
identity can be rewritten as follows:
\begin{align*}
\frac{d_t}{a}
&=\frac 1a\left[\Kcal(x_t,x_t)-k'_{t-1}(x_t)(K_{t-1}+aI)^{-1}k_{t-1}(x_t)\right]\\
&=\frac 1a\left[x'_tx_t-(x'_tX_{t-1})(X_{t-1}'X_{t-1}+aI)^{-1}(X'_{t-1}x_t)\right]\enspace.
\end{align*}
We can apply Lemma~\ref{lemma_matrix_inv} and further obtain
\begin{align}
\notag
\frac{d_t}{a} &=\frac 1a\left[x'_tx_t-x'_t(X_{t-1}X'_{t-1}+aI)^{-1}X_{t-1}X'_{t-1}x_t\right]\\
\notag
&=\frac
1a\left[x'_t(I-(X_{t-1}X'_{t-1}+aI)^{-1}X_{t-1}X'_{t-1})x_t\right]\\
\label{dt_linear}
&=x'_t(X_{t-1}X'_{t-1}+aI)^{-1}x_t\\
\notag
&=x'_tA^{-1}_{t-1}x_t\enspace.
\end{align}

Let us proceed to the middle term in the identity. The set of
functions $f_\theta(x)=\theta'x$ on $\R^n$ with the scalar product
$\langle f_{\theta_1},f_{\theta_2}\rangle=\theta_1'\theta_2$ is a
Hilbert space. It contains all functions $\Kcal(u,\cdot)=f_u$ and the
reproducing property for $\Kcal$ holds: $\langle
f_\theta,\Kcal(x,\cdot)\rangle= \langle
f_\theta,f_x\rangle=\theta'x=f_\theta(x)$. The
minimum in the middle term of Theorem~\ref{th_linear} is thus the
same as in the middle term of Theorem~\ref{th_main}.

For the right-hand side term the equality is obvious.

Now take an arbitrary kernel $\Kcal$ on a domain $X$ and let $\Fcal$
be the corresponding RKHS. We will apply a standard kernel trick.
Consider a sample $(x_1,y_1),(x_2,y_2),\ldots,(x_T,y_T)$, where
$x_t\in X$ and $y_t\in\R$, $t=1,2,\ldots,T$. It follows from the
representer theorem (see Proposition~\ref{prop_representer}) that the
minimum in the middle term is achieved on a linear combination of the
form $f(\cdot)=\sum_{t=1}^Tc_t\Kcal(x_t,\cdot)$, where
$c_1,c_2,\ldots,c_T\in\R$. These linear combinations form a
finite-dimensional subspace in the RKHS $\Fcal$. Let
$e_1,e_2,\ldots,e_m$, $m\le T$, be its orthonormal base and let
$\calC$ map each linear combination $f$ into the (column) vector of
its coordinates in $e_1,e_2,\ldots,e_m$. Since the base is
orthonormal, the scalar product does not change and $\langle
f_1,f_2\rangle_\Fcal=(\calC(f_1))'\calC(f_2)$. The
reproducing property implies that
$$f(x_t)=\langle f,\Kcal(x_t,\cdot)\rangle_\Fcal=
(\calC(f))'\calC(\Kcal(x_t,\cdot))$$
for $t=1,2,\ldots,T$.  We also have
$$\Kcal(x_i,x_j)=\langle\Kcal(x_i,\cdot),\Kcal(x_j,\cdot)\rangle_\Fcal=
(\calC(\Kcal(x_i,\cdot)))'\calC(\Kcal(x_j,\cdot)\enspace,$$
$i,j=1,2,\ldots,T$. Note that $\calC$ is a surjection: each
$\theta\in\R^m$ is an image of some linear combination $f$.

Consider the sample $(\tilde x_1,y_1),(\tilde x_2,y_2),\ldots,(\tilde
x_T,y_T)$, where $\tilde x_t=\calC(\Kcal(x_t,\cdot))\in\R^m$,
$t=1,2,\ldots,T$.  Clearly, linear ridge regression in the on-line
mode outputs the same predictions on this sample as the kernel ridge
regression on the original sample and $\langle\tilde x_i,\tilde
x_j\rangle=\Kcal(x_i,x_j)$. The minimum from Theorem~\ref{th_main} on
the original sample clearly coincides with the minimum from
Theorem~\ref{th_linear} on the new sample.

 Theorem~\ref{th_main} follows.

\appendixsection{Optimality of Kernel Ridge Regression}
\label{appendix_optimality}

In this appendix we derive the optimality property for the ridge
regression function function $f_\RR$.

\begin{proposition}
\label{prop_KRR}
Let $\Kcal: X\times X\to\R$ be a kernel on a domain $X$ and $\Fcal$ be
the corresponding RKHS. For every non-negative integer $T$, every
$x_1,x_2,\ldots,x_T\in X$ and $y_1,y_2,\ldots,y_T\in\R$, and every
$a>0$ the minimum
\begin{equation}
\label{eq_min}
\min_{f\in\Fcal}\left(
\sum_{t=1}^T(f(x_t)-y_t)^2+a\|f\|^2_\Fcal\right)
\end{equation}
is achieved on the unique function $f_\RR(x)=Y'(aI+K)^{-1}k(x)$ for
$T>0$, where $Y$, $K$, and $k(x)$ are as in
Subsection~\ref{subsect_RR}, and $f_\RR(x)=0$ identically for $T=0$
\end{proposition}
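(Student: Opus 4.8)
The plan is to reduce the infinite-dimensional minimisation over $\Fcal$ to a finite-dimensional quadratic problem and solve the latter explicitly. The case $T=0$ is immediate: the objective becomes $a\|f\|^2_\Fcal$, which is minimised uniquely by $f=0$, matching the stated convention. So assume $T>0$. First I would argue that a minimiser may be sought in the finite-dimensional subspace $V=\mathspan\{\Kcal(x_1,\cdot),\ldots,\Kcal(x_T,\cdot)\}$. For any $f\in\Fcal$ write $f=f_V+f_\perp$ with $f_V\in V$ and $f_\perp\perp V$; the reproducing property gives $f(x_t)=\langle f,\Kcal(x_t,\cdot)\rangle_\Fcal=\langle f_V,\Kcal(x_t,\cdot)\rangle_\Fcal=f_V(x_t)$ for every $t$, so the data-fit term depends only on $f_V$, while $\|f\|^2_\Fcal=\|f_V\|^2_\Fcal+\|f_\perp\|^2_\Fcal\ge\|f_V\|^2_\Fcal$. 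Hence discarding $f_\perp$ can only decrease the objective, so we may restrict attention to $f\in V$ (this is precisely the representer-theorem reduction).

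Next I would introduce coordinates. Writing $f=\sum_{t=1}^T c_t\Kcal(x_t,\cdot)$ with $C=(c_1,\ldots,c_T)'$, the reproducing property yields $f(x_i)=\sum_t c_t\Kcal(x_t,x_i)=(KC)_i$ and $\|f\|^2_\Fcal=\sum_{i,j}c_ic_j\Kcal(x_i,x_j)=C'KC$, so the objective becomes the vector expression $\|KC-Y\|^2+aC'KC$. Differentiating in $C$ and using $K'=K$, the stationarity condition is $K\bigl((K+aI)C-Y\bigr)=0$, which is solved by $C=(K+aI)^{-1}Y$ (well defined because $K+aI$ is positive-definite, hence non-singular). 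Since the coordinate objective is convex in $C$, any stationary point is a global minimiser over $V$. Substituting back gives $f_\RR(x)=\sum_t c_t\Kcal(x_t,x)=C'k(x)=Y'(K+aI)^{-1}k(x)$, using the symmetry of $(K+aI)^{-1}$; this establishes existence and the claimed closed form.

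It remains to prove uniqueness, and here I would work intrinsically on $\Fcal$ rather than on the coordinates. The map $f\mapsto\sum_t(f(x_t)-y_t)^2$ is convex, since each evaluation functional is linear, while $f\mapsto a\|f\|^2_\Fcal$ is strictly convex for $a>0$ by strict convexity of the squared Hilbert norm. Their sum is therefore strictly convex and so has at most one minimiser; combined with the existence just shown, $f_\RR$ is the unique minimiser.

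I anticipate the main subtlety to be the possible singularity of $K$, i.e.\ linear dependence among the functions $\Kcal(x_t,\cdot)$. This makes the coordinate vector $C$ non-unique and prevents the coordinate objective from being strictly convex, so uniqueness cannot be read off from the finite-dimensional picture. The resolution is to separate the two claims: the explicit formula is obtained at the level of coordinates, where one valid solution $C=(K+aI)^{-1}Y$ suffices (distinct coordinate vectors with the same $KC$ represent the same function), whereas uniqueness is argued on $\Fcal$ via strict convexity of the regularisation term. This division of labour keeps the argument clean whether or not $K$ is invertible.
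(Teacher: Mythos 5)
Your proof is correct, but it diverges from the paper's argument at two points, most substantially in how uniqueness is handled. The paper establishes existence by a coercivity--compactness argument (the objective tends to $+\infty$ as $\|f\|_\Fcal\to\infty$, so one minimises a continuous function over a bounded set in the finite-dimensional span), whereas you obtain global optimality of the explicit candidate $C=(K+aI)^{-1}Y$ from convexity of the coordinate objective plus stationarity --- slightly more economical, and equally valid since the quadratic part $C'K(K+aI)C$ is positive-semidefinite. For uniqueness, the paper stays in coordinates: it writes the general solution of the normal equations as $C=(K+aI)^{-1}Y+u$ with $Ku=0$ (using the identity $A(BA+aI)^{-1}=(AB+aI)^{-1}A$ to move $K$ past the inverse) and then shows that the function $f_u(x)=u'k(x)$ vanishes identically, because it is both a linear combination of the $\Kcal(x_t,\cdot)$ and orthogonal to all of them; hence every coordinate solution represents the same element of $\Fcal$. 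You instead argue uniqueness intrinsically via strict convexity of $f\mapsto\sum_t(f(x_t)-y_t)^2+a\|f\|^2_\Fcal$ on $\Fcal$, which is cleaner and sidesteps the singular-$K$ bookkeeping entirely; what the paper's route buys in exchange is the explicit observation (reused elsewhere in the text) that redundancy in the coefficient vector $C$ never changes the represented function. Your closing remark correctly identifies the singularity of $K$ as the crux, and your division of labour --- formula in coordinates, uniqueness in $\Fcal$ --- resolves it soundly.
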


\begin{proof}

If $T=0$, i.e., the initial sample is empty, the sum in~(\ref{eq_min})
contains no terms and the minimum is achieved on the unique function
$f$ with the norm $\|f\|_\Fcal=0$. This function is identically equal
to zero and it coincides with $f_\RR$ for this case by definition. For
the rest of the proof assume $T>0$.

The representer theorem (see Proposition~\ref{prop_representer})
implies that every minimum in~(\ref{eq_min}) is achieved on a linear
combination of the form $f(\cdot)=\sum_{t=1}^Tc_t\Kcal(x_t,\cdot)$.

The minimum in (\ref{eq_min}) thus can be taken over a finite-dimensional
space. As $\|f\|_\Fcal\to\infty$, the expression tends to $+\infty$,
and thus the minimum can be taken over a bounded set of functions. The
value $f(x)=\langle f, \Kcal(x,\cdot)\rangle_\Fcal$ is continuous in
$f$ for every $x\in X$. Therefore we are minimising a continuous
function over a bounded set in a finite-dimensional space. The minimum
must be achieved on some $f$.

Let $C=(c_1,c_2,\ldots,c_T)'$ be the vector of coefficients of some
optimal function $f(x)=\sum_{t=1}^Tc_t\Kcal(x_t,x)=C'k(x)$. It is easy to see
that the vector $(f(x_1), f(x_2), \ldots, f(x_T))'$ of values of $f$
equals $KC$ and
\begin{align*}
\|f\|^2_\Fcal&=
\sum_{i,j=1}^Tc_ic_j\langle\Kcal(x_i,\cdot),\Kcal(x_j,\cdot)\rangle_\Fcal\\
&= C'KC\enspace.
\end{align*}
Thus 
\begin{align*}
\sum_{t=1}^T(f(x)-y_t)^2+a\|f\|^2_\Fcal&=\|KC-Y\|^2+aC'KC\\
&=C'K^2C-2Y'KC+\|Y\|^2+aC'KC\enspace.
\end{align*}
Since $f$ is optimal, the derivative over $C$ must vanish.  By
differentiation we obtain
\begin{align*}
2K^2C-2KY+2aKC&=0\\
\intertext{and}
K(K+aI)C &=KY\enspace.
\end{align*}
Hence 
\begin{equation*}
(K+aI)C=Y+v
\end{equation*}
and
\begin{equation*}
C =(K+aI)^{-1}Y+(K+aI)^{-1}v\enspace,
\end{equation*}
where $v$ belongs to the null space of $K$, i.e., $Kv=0$.

Let us show that $K(K+aI)^{-1}v=0$. We need a simple matrix identity;
as it occurs in this paper quite often, we formulate it explicitly.
\begin{lemma}
\label{lemma_matrix_inv}
For any (not necessarily square) matrices $A$ and $B$ and any constant
$a$ the identity 
$$
A(BA+aI)^{-1}=(AB+aI)^{-1}A
$$
holds provided the inversions can be performed. If $B=A'$ and $a>0$,
the matrices $AB+aI$ and $BA+aI$ are both positive-definite and
therefore non-singular.
\end{lemma}

\begin{proof}
We have $ABA+aA=A(BA+aI)=(AB+aI)A$. If $AB+aI$ and $BA+aI$ are
invertible, we can multiply the equality by the inverses.
\end{proof}

We get $K(K+aI)^{-1}v=(K+aI)^{-1}Kv=0$. Therefore $C$ has the form
$C=(K+aI)^{-1}Y+u$, where $Ku=0$.

Consider the function $f_u(x)=u'k(x)$. It is a linear combination of
$\Kcal(x_i,\cdot)$. On the other hand, it vanishes at every $x_t$,
$t=1,2,\ldots,T$ because $Ku=0$. We have
\begin{equation*}
0 = f_u(x_t)= \langle f,\Kcal(x_t,\cdot)\rangle_\Fcal
\end{equation*}
and thus $f_u$ is orthogonal to the space of linear combinations. This
is only possible if $f_u=0$.

Thus the minimum can only be achieved on a unique function that can be
represented as $f_\RR(x)=Y'(K+aI)^{-1}k(x)$. Since it must be
achieved somewhere, it is achieved on $f_\RR$.
\end{proof}

\appendixsection{Representer Theorem}
\label{appendix_reproducing}

In this appendix we formulate and prove a version of the reproducing
property for RKHSs. See \cite{kern_generalized_representer} for more
details including a history of the theorem.

\begin{proposition}
\label{prop_representer}
Let $\Kcal$ be a kernel on a domain $X$, $\Fcal$ be the corresponding
RKHS and $(x_1,y_1),(x_2,y_2),\ldots,(x_T,y_T)$ be a sample such that
$x_t\in X$ and $y_t\in\R$, $t=1,2,\ldots,T$. Then for every
$f\in\Fcal$ there is a linear combination $\tilde
f(\cdot)=\sum_{t=1}^Tc_t\Kcal(x_t,\cdot)\in\Fcal$ such that
$$ 
\sum_{t=1}^T(\tilde f(x_t)-y_i)^2\le\sum_{t=1}^T(f(x_t)-y_i)^2
$$ and $\|\tilde f\|_\Fcal\le\|f\|_\Fcal$. 
If $f$ is not itself a linear combination of this type, there is a
linear combination $\tilde f$ with this property such that $\|\tilde
f\|_\Fcal<\|f\|_\Fcal$.
\end{proposition}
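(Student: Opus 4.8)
The plan is to prove the statement by orthogonally projecting $f$ onto the finite-dimensional subspace spanned by the functions $\Kcal(x_t,\cdot)$ and showing that this projection is the desired $\tilde f$. First I would set $L=\mathspan\{\Kcal(x_1,\cdot),\Kcal(x_2,\cdot),\ldots,\Kcal(x_T,\cdot)\}\subseteq\Fcal$. Being finite-dimensional, $L$ is a closed subspace of the Hilbert space $\Fcal$, so the orthogonal projection onto $L$ is well defined and every $f\in\Fcal$ decomposes uniquely as $f=\tilde f+g$ with $\tilde f\in L$ and $g$ orthogonal to $L$. By construction $\tilde f$ is a linear combination $\sum_{t=1}^T c_t\Kcal(x_t,\cdot)$ of the required form.

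Next I would verify the two assertions. For the norm, the Pythagoras theorem gives $\|f\|_\Fcal^2=\|\tilde f\|_\Fcal^2+\|g\|_\Fcal^2\ge\|\tilde f\|_\Fcal^2$, which yields $\|\tilde f\|_\Fcal\le\|f\|_\Fcal$. For the loss term, the reproducing property is the key ingredient: since $g$ is orthogonal to $L$ and $\Kcal(x_t,\cdot)\in L$, we have $g(x_t)=\langle g,\Kcal(x_t,\cdot)\rangle_\Fcal=0$ for every $t$, and hence $f(x_t)=\tilde f(x_t)$. Therefore the two sums $\sum_{t=1}^T(f(x_t)-y_t)^2$ and $\sum_{t=1}^T(\tilde f(x_t)-y_t)^2$ are in fact equal, which is even stronger than the claimed inequality.

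Finally, for the strict part, I would observe that if $f$ is not itself a linear combination of the stated type then $f\notin L$, so the orthogonal component $g=f-\tilde f$ is nonzero and $\|g\|_\Fcal^2>0$. The Pythagoras identity then gives $\|\tilde f\|_\Fcal^2=\|f\|_\Fcal^2-\|g\|_\Fcal^2<\|f\|_\Fcal^2$, as required. I expect the only delicate point to be justifying the existence and uniqueness of the orthogonal decomposition; this rests on the fact that a finite-dimensional subspace of a Hilbert space is automatically closed, so the projection theorem applies. Everything else follows directly from the reproducing property and Pythagoras, with no computation beyond these identities.
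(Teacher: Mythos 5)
Your proposal is correct and follows essentially the same route as the paper's own proof: orthogonal projection onto the (closed, finite-dimensional) span of the $\Kcal(x_t,\cdot)$, the reproducing property to show $f$ and $\tilde f$ agree at the sample points, and the Pythagoras theorem for the norm comparison including the strict case. No gaps.
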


\begin{proof}
The linear combinations of $\Kcal(x_t,\cdot)$ form a
finite-dimensional (and therefore closed) subspace in the Hilbert
space $\Fcal$. Every $f\in\Fcal$ can be represented as $f=h+g$, where
$h$ is a linear combination and $g$ is orthogonal to the subspace of
linear combinations. For every $t=1,2,\ldots,T$ we have
$g(x_t)=\langle g,\Kcal(x_t,\cdot)\rangle_\Fcal=0$ and the values of
$f$ and $h$ on $x_1,x_2,\ldots,x_T$ coincide. On the other hand, the
Pythagoras theorem implies that
$\|f\|_\Fcal^2=\|h\|_\Fcal^2+\|g\|_\Fcal^2\ge \|h\|_\Fcal^2$; if $g\ne
0$, the inequality is strict.
\end{proof}

\appendixsection{An Upper Bound on a Determinant}
\label{appendix_cb_bound}

In this appendix we reproduce an upper bound from
\cite{cesa-bianchi_perceptron}.

\begin{proposition}
Let the columns of a $n\times T$ matrix $X$ be vectors
$x_1,x_2,\ldots,x_T\in\R^n$ and $a>0$. If $\|x_t\|\le B$,
$t=1,2,\ldots,T$, then
$$
\det\left(I+\frac 1a XX'\right)=\det\left(I+\frac 1a X'X\right)\le
\left(1+\frac{TB^2}{an}\right)^n
\enspace.
$$
\end{proposition}

\begin{proof}
Let $\lambda_1,\lambda_2,\ldots,\lambda_n\ge 0$ be the eigenvalues
(counting multiplicities) of the symmetric positive-definite matrix
$XX'$.  The eigenvalues of $I+\frac 1a XX'$ are then
$1+\lambda_1/a,1+\lambda_2/a,\ldots,1+\lambda_n/a$ and $\det(I+\frac
1a XX')=\prod_{i=1}^n(1+\frac{\lambda_i}a)$.

The sum of eigenvalues $\lambda_1+\lambda_2+\ldots+\lambda_n$ equals
the trace $\tr(XX')$ and $\tr(XX')=\tr(X'X)$. Indeed, the matrices
$AB$ and $BA$ (provided they exist) have the same non-zero eigenvalues
counting multiplicities while zero eigenvalues do not contribute to
the trace. Alternatively one can verify the equality $\tr(AB)=\tr(BA)$
by a straightforward calculation, see, e.g., \cite{matan_axler},
Proposition~10.9 (p.~219). The matrix $X'X$ is the Gram matrix of
vectors $x_1,x_2,\ldots,x_T$ and the elements on its diagonal are the
squared quadratic norms of the vectors not exceeding $B^2$. We get
$\tr(XX')=\tr(X'X)\le TB^2$.

The problem has reduced to obtaining an upper bound on the product of
some positive numbers with a known sum. The inequality of arithmetic
and geometric means implies that
\begin{equation*}
\prod_{i=1}^n\left(1+\frac 1a\lambda_i\right)\le
\left(\frac 1n\sum_{i=1}^n\left(1+\frac 1a\lambda_i\right)\right)^n
= \left(1+\frac 1{an} \sum_{i=1}^n\lambda_i\right)^n\enspace.
\end{equation*}
Combining this with the bound on the trace obtained earlier proves the
lemma.
\end{proof}

\appendixsection{A lemma about partitioned matrices}
\label{appendix_partition}

In this appendix we formulate and prove a matrix lemma for the proof
of Lemma~\ref{lemma_dt_to_0}.

\begin{lemma}
\label{lemma_partition}
If a symmetric positive-definite matrix $M$ is partitioned as
\begin{equation*}
M=
\begin{pmatrix}
A & B\\
B'& D 
\end{pmatrix}\enspace,
\end{equation*}
where the $A$ and $D$ are square matrices, and a column vector $x$ is
partitioned as
\begin{equation*}
x=
\begin{pmatrix}
u \\
v
\end{pmatrix}\enspace,
\end{equation*}
where $u$ is of the same height as $A$, then $x'M^{-1}x \ge
u'A^{-1}u\ge 0$.
\end{lemma}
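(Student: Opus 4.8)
The plan is to reduce the inequality to the variational characterisation of $x'M^{-1}x$ as the maximum of a concave quadratic, so that a lower bound follows immediately from evaluating at one cleverly chosen test vector. First I would dispose of the trivial half, $u'A^{-1}u\ge 0$. Since $M$ is symmetric positive-definite, applying the definition of positive-definiteness to vectors of the form $(u',0')'$ (i.e.\ with a zero lower block) shows that the leading principal block $A$ is itself positive-definite. Hence $A$ is non-singular, $A^{-1}$ is positive-definite, and $u'A^{-1}u\ge 0$ for every $u$.

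For the main inequality $x'M^{-1}x\ge u'A^{-1}u$ I would invoke the identity
\[
x'M^{-1}x=\max_{z}\left(2z'x-z'Mz\right)\enspace,
\]
valid for any symmetric positive-definite $M$. This is proved by completing the square: for every $z$ one has $2z'x-z'Mz=x'M^{-1}x-(z-M^{-1}x)'M(z-M^{-1}x)$, and the subtracted term is non-negative because $M$ is positive-definite, vanishing exactly at $z=M^{-1}x$. The point of this reformulation is that the maximum dominates the value of $2z'x-z'Mz$ at \emph{any} particular $z$, so each choice of $z$ yields a lower bound on $x'M^{-1}x$.

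The key step is then the choice of test vector. I would evaluate at
\[
z=\begin{pmatrix} A^{-1}u\\ 0\end{pmatrix}\enspace,
\]
partitioned conformally with $M$. A direct block multiplication gives $Mz=\bigl((A A^{-1}u)',(B'A^{-1}u)'\bigr)'=(u',(B'A^{-1}u)')'$, so that $z'Mz=u'A^{-1}u$ (the zero lower block annihilates the contribution of $B'A^{-1}u$), while $z'x=u'A^{-1}u$ using the symmetry of $A^{-1}$. Hence $2z'x-z'Mz=2u'A^{-1}u-u'A^{-1}u=u'A^{-1}u$, and the variational identity yields $x'M^{-1}x\ge u'A^{-1}u$. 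Combined with the first paragraph this gives $x'M^{-1}x\ge u'A^{-1}u\ge 0$, as required.

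I do not expect a genuine obstacle; the only things to get right are the bookkeeping in the product $Mz$ and the observation that the vanishing lower block of $z$ is precisely what suppresses the off-diagonal and lower-right blocks $B$ and $D$ of $M$. As an alternative route one could invert $M$ explicitly through the Schur complement $S=D-B'A^{-1}B$, which is positive-definite since $M$ is, and obtain $x'M^{-1}x=u'A^{-1}u+(v-B'A^{-1}u)'S^{-1}(v-B'A^{-1}u)$, after which the inequality follows from the positive-definiteness of $S^{-1}$; but the variational argument has the advantage of never requiring the explicit form of $M^{-1}$.
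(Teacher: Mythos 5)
Your proof is correct, and it takes a genuinely different route from the paper's. The paper inverts $M$ explicitly via the block-inversion formula, isolates the term $u'A^{-1}u$ in the expansion of $x'M^{-1}x$, and then shows the remaining three terms form a non-negative quadratic form $w'E^{-1}w-2w'E^{-1}v+v'E^{-1}v$ (with $E=D-B'A^{-1}B$ and $w=B'A^{-1}u$), non-negativity being established through a separate sublemma on matrices of the form $\bigl(\begin{smallmatrix}H & -H\\ -H & H\end{smallmatrix}\bigr)$. You instead use the variational identity $x'M^{-1}x=\max_{z}(2z'x-z'Mz)$ and evaluate at the single test vector $z=(u'A^{-1},0)'{}'$; your block computation of $Mz$ and the resulting value $u'A^{-1}u$ check out, and the non-singularity of $A$ is correctly justified from the positive-definiteness of $M$ restricted to vectors with zero lower block. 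What your argument buys is economy: it never computes $M^{-1}$, never needs the invertibility of the Schur complement, and dispenses with the paper's auxiliary sublemma. What the paper's computation buys is an exact expression for the gap $x'M^{-1}x-u'A^{-1}u$ as an explicit quadratic form in the Schur complement's inverse (which, as your own alternative remark notes, collapses to $(v-B'A^{-1}u)'E^{-1}(v-B'A^{-1}u)$ --- in fact a slightly shorter way to finish the paper's argument than the sublemma it invokes). Either proof is adequate for the application in Lemma~\ref{lemma_dt_to_0}.
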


\begin{proof}
We shall rely on the following formula for inverting a partitioned
matrix: if
\begin{equation*}
M=
\begin{pmatrix}
P & Q\\
R & S\\
\end{pmatrix}
\end{equation*}
then the inverse can be written as
\begin{equation*}
M^{-1}=
\begin{pmatrix}
\tilde P & \tilde Q\\
\tilde R & \tilde S\\
\end{pmatrix}\enspace,
\end{equation*}
where
\begin{align*}
\tilde P & = P^{-1}+P^{-1}Q(S-RP^{-1}Q)^{-1}RP^{-1}\enspace,\\
\tilde Q & = -P^{-1}Q(S-RP^{-1}Q)^{-1}\enspace,\\
\tilde R & = -(S-RP^{-1}Q)^{-1}RP^{-1}\enspace,\\
\tilde S & = (S-RP^{-1}Q)^{-1}\enspace,
\end{align*}
provided all the inverses exist (see \cite{raznoe_numrecpp},
Section~2.7.4, equation~(2.7.25)). Applying these formulae to
our partitioning of $M$ we get
\begin{equation*}
M^{-1}=
\begin{pmatrix}
A^{-1}+A^{-1}BE^{-1}B'A^{-1} & -A^{-1}BE^{-1}\\
-E^{-1}B'A^{-1} & E^{-1} \\
\end{pmatrix}\enspace,
\end{equation*}
where $E = D-B'A^{-1}B$. 

The matrix $A$ is symmetric positive-definite as a minor of a
symmetric positive-definite matrix; therefore it is
non-singular. Non-singularity of $E$ follows from the identity
\begin{equation*}
\det M = \det P \det(S-RP^{-1}Q)\enspace,
\end{equation*}
where $M$ and its blocks are as above (see \cite{raznoe_numrecpp},
Section~2.7.4, equation~(2.7.26) and \cite{matan_hornjohnson},
Section~0.8.5; the matrix $S-RP^{-1}Q$ is known as the Schur
complement of $P$). Applying this identity to our matrices yields
\begin{equation*}
\det M=\det A\det E
\end{equation*}
and since both $M$ and $A$ are non-singular, $E$ is also
non-singular. This justifies the use of the formula for the inverse of
a partitioned matrix in this case.

Note also that $E^{-1}$ is symmetric and positive-definite as a minor
of a symmetric positive-definite matrix
$M^{-1}$.

We can now write
\begin{equation*}
x'Mx= 
u'A^{-1}u+\\u'A^{-1}BE^{-1}B'A^{-1}u-2u'A^{-1}BE^{-1}v+v'E^{-1}v
\end{equation*}
(since $u'A^{-1}BE^{-1}v$ is a number, it equals its transpose). The
first term in the sum is just what we need for the statement of the
lemma.  Let us show that the sum of the remaining three terms is
non-negative.  Let $w= B'A^{-1}u$. We have
\begin{multline*}
u'A^{-1}BE^{-1}B'A^{-1}u-2u'A^{-1}BE^{-1}v+v'E^{-1}v=\\
w'E^{-1}w-2w'E^{-1}v+v'E^{-1}v=
\begin{pmatrix}
w' & v'\\
\end{pmatrix}
\begin{pmatrix}
E^{-1} & -E^{-1}\\
-E^{-1} & E^{-1}  
\end{pmatrix}
\begin{pmatrix}
w\\
v
\end{pmatrix}\enspace.
\end{multline*}
To complete the proof, we need the following simple lemma.
\begin{lemma}
If a matrix $H$ is symmetric positive-semidefinite, then the matrix
\begin{equation*}
\begin{pmatrix}
H  & -H\\
-H & H   
\end{pmatrix}
\end{equation*}
is also symmetric positive-semidefinite.
\end{lemma}

\begin{proof}
We will rely on the following criterion. A symmetric matrix $H$ is
positive-semidefinite if and only if it is has a symmetric square
root $L$ such that $H=L^2$ (the if part is trivial and the only if
part can be proven by considering the orthonormal base where $H$
diagonalises). We have
\begin{equation*}
\begin{pmatrix}
\frac{L}{\sqrt{2}}  & -\frac{L}{\sqrt{2}}\\
-\frac{L}{\sqrt{2}} & \frac{L}{\sqrt{2}}    
\end{pmatrix}
\begin{pmatrix}
\frac{L}{\sqrt{2}}  & -\frac{L}{\sqrt{2}}\\
-\frac{L}{\sqrt{2}} & \frac{L}{\sqrt{2}}    
\end{pmatrix}=
\begin{pmatrix}
L^2  & -L^2\\
-L^2 & L^2
\end{pmatrix}\enspace.  
\end{equation*}
\end{proof}

Thus 
\begin{equation*}
\begin{pmatrix}
w' & v'\\
\end{pmatrix}
\begin{pmatrix}
E^{-1} & -E^{-1}\\
-E^{-1} & E^{-1}  
\end{pmatrix}
\begin{pmatrix}
w\\
v
\end{pmatrix}\ge 0\enspace.
\end{equation*}
\end{proof}

\subsection*{Acknowledgements}

The authors have been supported through the EPSRC grant EP/F002998
`Practical competitive prediction'. The first author has also been
supported by an ASPIDA grant from the Cyprus Research Promotion
Foundation.

The authors are grateful to Vladimir Vovk and Alexey Chernov for
useful discussions and to anonymous COLT and ALT reviewers for
detailed comments.

\bibliographystyle{alpha}
\bibliography{C:/Bib/kolm,%
C:/Bib/volodya,%
C:/Bib/matan,%
C:/Bib/majority,%
C:/Bib/me,%
C:/Bib/raznoe,%
C:/Bib/rissanen,%
C:/Bib/vyugin,%
C:/Bib/calif,%
C:/Bib/steve,%
C:/Bib/processes,%
C:/Bib/kalman,%
C:/Bib/volatility,%
C:/Bib/hutter,%
C:/Bib/regression,%
C:/Bib/statistika,%
C:/Bib/kernels}

\begin{thebibliography}{CBLW96}

\bibitem[Aro43]{kern_aron43}
N.~Aronszajn.
\newblock La th\'eorie des noyaux reproduisants et ses applications.
  {P}remi\`ere partie.
\newblock {\em Proceedings of the Cambridge Philosophical Society},
  39:133--153, 1943.

\bibitem[AW01]{regression_azoury_warmuth}
K.~S. Azoury and M.~K. Warmuth.
\newblock Relative loss bounds for on-line density estimation with the
  exponential family of distributions.
\newblock {\em Machine Learning}, 43:211--246, 2001.

\bibitem[Axl97]{matan_axler}
S.~Axler.
\newblock {\em Linear Algebra Done Right}.
\newblock Springer, 2nd edition, 1997.

\bibitem[BB61]{beckenbach/bellman:1961}
E.~F. Beckenbach and R.~E. Bellman.
\newblock {\em Inequalities}.
\newblock Springer, 1961.

\bibitem[Bis06]{Bishop2006}
C.~M. Bishop.
\newblock {\em Pattern Recognition and Machine Learning}.
\newblock Springer, 2006.

\bibitem[BK07]{BusuttilK07}
S.~Busuttil and Y.~Kalnishkan.
\newblock Online regression competitive with changing predictors.
\newblock In {\em Algorithmic Learning Theory, 18th International Conference,
  Proceedings}, pages 181--195, 2007.

\bibitem[BP70]{brown_page_functional_analysis}
A.L. Brown and A.~Page.
\newblock {\em Elements of Functional Analysis}.
\newblock Van Nostrand Reinhold, 1970.

\bibitem[CBCG05]{cesa-bianchi_perceptron}
N.~Cesa-{B}ianchi, A.~Conconi, and C.~Gentile.
\newblock A second-order perceptron algorithm.
\newblock {\em SIAM Journal on Computing}, 34(3):640--668, 2005.

\bibitem[CBL06]{cbl_book_prediction}
N.~Cesa-Bianchi and G.~Lugosi.
\newblock {\em Prediction, Learning, and Games}.
\newblock Cambridge University Press, 2006.

\bibitem[CBLW96]{calif_wcql}
N.~Cesa-Bianchi, P.~Long, and M.~K. Warmuth.
\newblock Worst-case quadratic loss bounds for on-line prediction of linear
  functions by gradient descent.
\newblock {\em I{E}{E}{E} Transactions on Neural Networks}, 7:604--619, 1996.

\bibitem[CZ10]{chernov_discounted}
A.~V. Chernov and F.~Zhdanov.
\newblock Prediction with expert advice under discounted loss.
\newblock In {\em ALT}, pages 255--269, 2010.

\bibitem[DMW88]{raznoe_santis}
Alfredo De{S}antis, George Markowski, and Mark~N. Weigman.
\newblock Learning probabilistic prediction functions.
\newblock In {\em Proceedings of the 1988 Workshop on Computational Learning
  Theory}, pages 312--328, 1988.

\bibitem[Har97]{Harville1997}
D.~A. Harville.
\newblock {\em Matrix Algebra From a Statistician's Perspective}.
\newblock Springer, 1997.

\bibitem[HJ85]{matan_hornjohnson}
R.~A. Horn and C.~R. Johnson.
\newblock {\em Matrix analysis}.
\newblock Cambridge University Press, 1985.

\bibitem[HK00]{regression_hoerl_kennard}
Arthur~E. Hoerl and Robert~W. Kennard.
\newblock Ridge regression: biased estimation for nonorthogonal problems.
\newblock {\em Technometrics}, 42:80--86, 2000.

\bibitem[HLP52]{matan_hardy_inequalities}
G.~H. Hardy, J.~E. Littlewood, and G.~P\'olya.
\newblock {\em Inequalities}.
\newblock Cambridge University Press, second edition, 1952.

\bibitem[Hoe62]{regression_hoerl}
A.~E. Hoerl.
\newblock Application of ridge analysis to regression problems.
\newblock {\em Chemical Engineering Progress}, 58:54--59, 1962.

\bibitem[HS81]{matan_henderson}
H.~V. Henderson and S.~R. Searle.
\newblock On deriving the inverse of a sum of matrices.
\newblock {\em {SIAM} Review}, 23(1), 1981.

\bibitem[HW01]{herbster_tracking}
M.~Herbster and M.~K. Warmuth.
\newblock Tracking the best linear predictor.
\newblock {\em Journal of Machine Learning Research}, 1:281--309, 2001.

\bibitem[KSF05]{regression_kakade2005}
S.~M. Kakade, M.~W. Seeger, and D.~P. Foster.
\newblock Worst-case bounds for {G}aussian process models.
\newblock In {\em Proceedings of the 19th Annual Conference on Neural
  Information Processing Systems}, 2005.

\bibitem[KTT09]{regression_kumon}
M.~Kumon, A.~Takemura, and K.~Takeuchi.
\newblock Sequential optimizing strategy in multi-dimensional bounded
  forecasting games.
\newblock {\em CoRR}, abs/0911.3933v1, 2009.

\bibitem[KW97]{KivinenW97}
J.~Kivinen and M.~K. Warmuth.
\newblock Exponentiated gradient versus gradient descent for linear predictors.
\newblock {\em Infornation and Computation}, 132(1):1--63, 1997.

\bibitem[Lam77]{processes_lamperti}
J.~Lamperti.
\newblock {\em Stochastic Processes: A Survey of the Mathematical Theory}.
\newblock Springer, 1977.

\bibitem[PTVF07]{raznoe_numrecpp}
W.~H. Press, S.~A. Teukolsky, W.~T. Vetterling, and B.~P. Flannery.
\newblock {\em Numerical Recipies: The Art of Scientific Computing}.
\newblock {C}ambrige {U}niversity {P}ress, 3rd edition, 2007.

\bibitem[RW06]{Rasmussen2006}
C.~E. Rasmussen and C.~K.~I. Williams.
\newblock {\em Gaussian Processes for Machine Learning}.
\newblock MIT Press, Cambridge, 2006.

\bibitem[SGV98]{vovk_ridge}
C.~Saunders, A.~Gammerman, and V.~Vovk.
\newblock Ridge regression learning algorithm in dual variables.
\newblock In {\em Proceedings of the 15th International Conference on Machine
  Learning}, pages 515--521, 1998.

\bibitem[SHS01]{kern_generalized_representer}
B.~Sch\"olkopf, R.~Herbrich, and A.~J. Smola.
\newblock A generalized representer theorem.
\newblock In {\em Proceedings of {COLT}/{E}uro{COLT} 2001}, LNAI 2111, pages
  416--426, 2001.

\bibitem[SKF08]{kakade_seeger_long}
M.~W. Seeger, S.~M. Kakade, and D.~P. Foster.
\newblock Information consistency of nonparametric {G}aussian process methods.
\newblock {\em {IEEE} Transactions on Information Theory}, 54(5):2376--2382,
  2008.

\bibitem[Vov90]{vovk_aggr}
V.~Vovk.
\newblock Aggregating strategies.
\newblock In {\em Proceedings of the 3rd Annual Workshop on Computational
  Learning Theory}, pages 371--383, San Mateo, CA, 1990. Morgan Kaufmann.

\bibitem[Vov98]{vovk_advice}
V.~Vovk.
\newblock A game of prediction with expert advice.
\newblock {\em Journal of Computer and System Sciences}, 56:153--173, 1998.

\bibitem[Vov01]{vovk_cols}
V.~Vovk.
\newblock Competitive on-line statistics.
\newblock {\em International Statistical Review}, 69(2):213--248, 2001.

\bibitem[ZV09]{abs-0910-4683}
Fedor Zhdanov and Vladimir Vovk.
\newblock Competing with gaussian linear experts.
\newblock {\em CoRR}, abs/0910.4683, 2009.

\end{thebibliography}

\end{document}